\def\eqref#1{equation~\ref{#1}}
\def\1{\bm{1}}
\DeclareMathAlphabet{\mathsfit}{\encodingdefault}{\sfdefault}{m}{sl}
\SetMathAlphabet{\mathsfit}{bold}{\encodingdefault}{\sfdefault}{bx}{n}
\def\gE{{\mathcal{E}}}
\def\gF{{\mathcal{F}}}
\newcommand{\E}{\mathbb{E}}
\newcommand{\R}{\mathbb{R}}
\newcommand{\etest}{\gE_{\text{test}}}
\renewcommand{\paragraph}[1]{\smallskip\noindent\textbf{#1}\;}
\DeclareMathOperator*{\argmax}{arg\,max}
\DeclareMathOperator*{\argmin}{arg\,min}
\definecolor{DarkGreen}{rgb}{0.15,0.5,0.15}
\definecolor{DarkRed}{rgb}{0.5,0.1,0.1}
\definecolor{DarkBlue}{rgb}{0.1,0.1,0.5}
\definecolor{Magenta}{rgb}{0.9,0.1,0.9}
\newcommand{\twonorm}[1]{\ensuremath{\| #1 \|_2}}
\newcommand{\twonormsq}[1]{\ensuremath{\| #1 \|_2^2}}
\newcommand{\calE}{\mathcal{E}}
\newcommand{\calR}{\mathcal{R}}
\newcommand{\calX}{\mathcal{X}}
\newcommand{\calY}{\mathcal{Y}}
\newcommand{\sigmin}{\sigma_{\min}}
\newcommand{\sigmax}{\sigma_{\max}}
\newcommand{\hatbeta}{\hat\beta}
\newcommand{\hatbetat}{\hat\beta_t}
\newcommand{\betastarprev}{\beta^*_{t-1}}
\newcommand{\lambdat}{\lambda_t}
\newcommand{\ood}{OOD }
\newcommand{\good}{domain }
\newtheorem{lemma}{Lemma}
\newtheorem{corollary}{Corollary}
\newtheorem{proposition}{Proposition}
\begin{document}

\title{An Online Learning Approach to \\Interpolation and Extrapolation in Domain Generalization}
\author{
Elan Rosenfeld \thanks{Machine Learning Department, CMU. Email: {\tt
elan@cmu.edu}}
\and
Pradeep Ravikumar \thanks{Machine Learning Department, CMU. Email: {\tt
pradeepr@cs.cmu.edu}} 
\and
Andrej Risteski \thanks{Machine Learning Department, CMU. Email: {\tt aristesk@andrew.cmu.edu}} 
}
\date{}
\maketitle

\begin{abstract}
\noindent A popular assumption for out-of-distribution generalization is that the training data comprises sub-datasets, each drawn from a distinct distribution; the goal is then to ``interpolate'' these distributions and ``extrapolate'' beyond them---this objective is broadly known as domain generalization. A common belief is that ERM can interpolate but not extrapolate and that the latter task is considerably more difficult, but these claims are vague and lack formal justification. In this work, we recast generalization over sub-groups as an online game between a player minimizing risk and an adversary presenting new test distributions. Under an existing notion of inter- and extrapolation based on reweighting of sub-group likelihoods, we rigorously demonstrate that extrapolation is computationally much harder than interpolation, though their statistical complexity is not significantly different. Furthermore, we show that ERM---or a noisy variant---is \emph{provably minimax-optimal} for both tasks. Our framework presents a new avenue for the formal analysis of domain generalization algorithms which may be of independent interest.
\end{abstract}

\section{Introduction}

\label{sec:intro}

Modern machine learning algorithms excel when the training and test distributions match but often fail under even moderate distribution shift \citep{beery2018recognition}; learning a predictor which generalizes to distributions which differ from the training data is therefore an important task. This objective, broadly referred to as out-of-distribution (OOD) generalization, was classically explored in a setting where there is a single ``source'' training distribution and a different ``target'' test distribution. Achieving good performance in this setting is impossible in general, so researchers have formalized several possible frameworks to study. One common choice is to make specific assumptions about covariate or label shift~\citep{widmer1996learning, bickel2009discriminative, lipton2018detecting}; another approach is Distributionally Robust Optimization (DRO), where the test distribution is assumed to lie in some uncertainty set around the training distribution~\citep{bagnell2005robust, rahimian2019distributionally}.

There has been considerable recent interest in moving beyond a single source distribution, instead assuming that the set of training data is comprised of a collection of ``environments''~\citep{blanchard2011generalizing, muandet2013domain, peters2016causal} or ``groups''~\citep{hu2018does, duchi2019distributionally, sagawa2020distributionally}, each representing a distinct distribution,\footnote{Throughout this work, we use the terms ``domain'', ``distribution'', and ``environment'' interchangeably.} where the group identity of each sample may be known. Such a setting is referred to as \emph{domain generalization}. The hope is that by cleverly training on such a collection of groups, one can derive a robust predictor which will better transfer to unseen test data. Previous literature has focused exclusively on worst-case domain generalization, where the test environment is chosen to be the worst choice among a constrained set of possible test environments. It is useful to cast such a task as solving a one-shot min-max game, where the learner selects the predictor and then an adversary selects the test environment. A key specification for this game is how future test distributions depend on the training domains (i.e., the action space for the adversary).

The most immediate choice for the set of possible test environments is simply the set of training environments. More broadly, researchers have considered how to perform well when the adversary is allowed to present test distributions which ``interpolate'' the training distributions or ``extrapolate'' beyond them, but it is unclear what is the ideal formalization of such interpolations and extrapolations. A popular choice for modeling interpolation is to allow any convex combination of the training environments---this is referred to as \emph{group/sub-population shift}, and the resulting objective is known as \emph{Group Distributionally Robust Optimization} (DRO). \citet{duchi2019distributionally, sagawa2020distributionally} give efficient algorithms for solving the Group DRO objective, but a key point is that the resulting min-max objective is \emph{exactly equivalent} to when the adversary is limited to playing only the training environments. For modeling extrapolation, \citet{krueger2020out} consider ``extrapolating'' the training likelihoods (we make this formal in Section~\ref{sec:formal-notion}), but in this game the adversary's choice will still always be a vertex of the playable region. Thus, solving the one-shot min-max game under likelihood reweighting is always equivalent to simply minimizing worst-case risk on a discrete set.

In addition to this, formal analyses of these games are sparse. A common belief is that Empirical Risk Minimization (ERM) excels at interpolation but not extrapolation; it is also generally held as folklore that extrapolation is a much harder task, which is why generalization is so difficult---but these claims are understood intuitively, rather than mathematically. Further, \citet{sagawa2020distributionally} find that when using modern neural networks in the interpolation regime, explicitly solving the Group DRO objective does not yield better solutions than simple ERM with strong regularization. Thus the relative optimality of ERM and other domain generalization algorithms remains unclear. In light of these points, we begin by considering the question: \textbf{Is there an alternative to the single-round min-max game which might allow for a more in-depth analysis of the statistical and algorithmic properties of the task of domain generalization?}

One final additional caveat with this line of research is its emphasis on worst-case optimality over all possible test environments, which is often unnecessarily conservative. This is exemplified by empirical evaluations in the \ood literature: these works train a predictor on the source data and then evaluate it on \emph{a single test set} which is chosen adversarially with respect to the predictor. Such a protocol often misses the mark for realistically comparing the expected performance of different algorithms.
For example, \citet{gulrajani2021search} point out that many recent works deliberately evaluate on a single train/test environment split with an unreasonably difficult distribution shift. When averaging performance over multiple environment splits, they find that no algorithm outperforms ERM.
This adversarial analysis can indeed be appropriate for quantifying how an algorithm will perform in the worst possible case (particularly in safety-critical applications), but this frequently does not reflect a predictor's quality in the real world: when the test environments are \emph{not} chosen adversarially, a reasonable learning algorithm should be able to do significantly better. Thus the crucial distinction is that \textbf{existing frameworks are minimax because they demand good performance of an algorithm even in the worst case, not because we actually expect the test environments to be chosen adversarially.}\footnote{This is a subtle point which we discuss in greater detail in Section~\ref{sec:fixed-baseline}.} This suggests there is room for a more nuanced measure of \ood generalization, one which adequately captures the purpose of such algorithms---to achieve consistently good performance on all possible test distributions---and allows for a formal comparison of their performances.

In this work, we aim to address the two main gaps identified above: formalizing the difference, if any (statistical \emph{and} computational), between ERM and other \ood algorithms in both interpolation and extrapolation group shift settings; and doing so in a framework that allows us to analyze a predictor's performance on potentially non-adversarial (e.g., stochastic) future test environments. To do this, we take inspiration from the literature of online convex optimization \citep{hazan2016introduction} and ask what can be achieved in a game where the learner is allowed to repeatedly refine their predictor upon observing new environments. Our analysis therefore captures an algorithm's ability to \emph{learn and adapt} from multiple training distributions to suffer less under distribution shift and consequently perform better, on average, on future test sets. Our multi-round game generalizes existing work on \good generalization, providing new insights into the quantifiable effects of observing different environments as a function of both their number and their geometric diversity. Further, this new perspective allows for a theoretical analysis of the computational and statistical complexity of interpolation versus extrapolation, formalizing and verifying the answers to several outstanding questions which until now have only been stated intuitively.

Concretely, this work makes the following contributions:
\begin{itemize}
    \item We recast domain generalization as a repeated online game between an adversary presenting test distributions and a player minimizing \emph{cumulative regret}. This framework enables meaningful analysis beyond the single-round minimax setting, and we expect it can serve as a new approach to the formal study of the efficacy of robust \ood generalization algorithms.
    \item Under an existing notion of inter- and extrapolation, we tightly characterize their respective complexities. Specifically, we prove that i) extrapolation is indeed exponentially more difficult than interpolation in a computational sense, but ii) the statistical complexity of extrapolation is not significantly higher.
    \item For both inter- and extrapolation, we show that ERM---or a noisy variant---is \emph{provably minimax-optimal} with respect to regret, as a function of the number of environments observed. For minimizing regret over any time horizon, it is impossible to improve over ERM without additional assumptions. This result supplements recent works which support the same idea theoretically \citep{rosenfeld2021risks} and empirically \citep{gulrajani2021search} for the single-round setting.
\end{itemize}

\section{The Single-Round Domain Generalization Game}

\label{sec:formal-notion}

The key assumption of domain generalization is that the training set comprises a set of distinct domains $\calE = \{e_i\}_{i=1}^E$, each of which indexes a probability distribution $p^e$, and that the test environment will relate to these domains in some pre-specified way. Let us denote the set of such possible test distributions by $\etest$. It's common to use a minimax formulation, wherein the learner's goal is to minimize the worst-case error over the possible test distributions $\etest$. For a set of predictors $\gF$ and loss $\ell$, our goal is thus to solve the objective
\begin{align*}
    \min_{f\in\gF} \max_{e\in\etest} \E_{e}[\ell(f)].
\end{align*}
In an adversarial framework, $\etest$ is the ``playable region'' of the adversary, similar to the uncertainty set in traditional DRO. A critical ingredient of the game as noted earlier is how this set of test distributions $\etest$ depends on the training domains $\calE$. It is typically presented as belonging to one of two distinct settings: \emph{interpolation} and \emph{extrapolation}. Intuitively, the interpolation setting should consist of environments which do not vary ``beyond'' the observed training environments, while the extrapolation setting should allow for such variation to some degree. However, these terms do not have a single agreed-upon meaning.

\paragraph{Formally modeling interpolation.} Given a collection of environments, there are many possible ways to consider interpolating them. In this work, we limit our analysis to the notion of likelihood reweighting which has been used previously in several works \citep{duchi2019distributionally, albuquerque2020generalizing, sagawa2020distributionally}.\footnote{As another possibility, we could directly interpolate between two samples, but this is unlikely to be meaningful for highly complex data such as images. If we were to pose a generative model, it would instead be natural to consider interpolations of the generative parameters.} We model the interpolation of a set of domains as all convex combinations (i.e., mixtures) of their likelihoods. Formally, an interpolation of the domains in $\calE$ is any distribution which is written
\begin{align}
\label{eq:dist-interpolation}
    p^{\lambda} &:= \sum_{e\in\calE} \lambda_e p^e,
\end{align}
where $\lambda\in\Delta_E$ is a vector of convex coefficients ($\Delta_E$ is the $(E-1)$-simplex). This is a fairly natural definition, as the space of interpolations is defined as the convex hull of the environments $\calE$ in distribution-space. We will denote this convex hull $\textrm{Conv}(\calE)$.

Observe that this definition is mathematically equivalent to the set of environments which can be generated via group shift, and solving the above min-max objective is precisely Group DRO. However, this notion of single-round interpolation, while perhaps intuitive, does not actually induce a more meaningful playable region for the adversary. This is because for any predictor, the optimal choice for the adversary will be whichever training environment produces the highest risk; that is, the adversary will \emph{always} play a vertex of the simplex. Thus, these two games are equivalent:
\begin{proposition}[Equivalence of interpolation and the discrete one-shot game]
\begin{align*}
\min_{f\in\gF} \max_{e\in\textrm{Conv}(\calE)} \E_{e}[\ell(f)] &\ =\  \min_{f\in\gF} \max_{e\in\calE} \E_{e}[\ell(f)].
\end{align*}
\end{proposition}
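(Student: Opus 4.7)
The plan is to observe that for any fixed predictor $f \in \gF$, the map $\lambda \mapsto \E_{p^\lambda}[\ell(f)]$ is \emph{linear} in the mixture weights $\lambda \in \Delta_E$, so its maximum over the simplex is attained at a vertex---i.e., at a training environment in $\calE$. Once the inner maxima coincide pointwise in $f$, taking the outer $\min_{f \in \gF}$ of both sides yields the claimed equality.

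Concretely, I would first fix an arbitrary $f \in \gF$ and argue $\max_{e \in \textrm{Conv}(\calE)} \E_e[\ell(f)] = \max_{e \in \calE} \E_e[\ell(f)]$. The direction $\geq$ is immediate from $\calE \subseteq \textrm{Conv}(\calE)$. For $\leq$, I would use the defining form \eqref{eq:dist-interpolation}: by linearity of expectation,
\begin{equation*}
\E_{p^\lambda}[\ell(f)] \;=\; \sum_{e \in \calE} \lambda_e \, \E_{p^e}[\ell(f)] \;\leq\; \left(\max_{e \in \calE} \E_{p^e}[\ell(f)]\right) \sum_{e \in \calE} \lambda_e \;=\; \max_{e \in \calE} \E_{p^e}[\ell(f)],
\end{equation*}
where the last equality uses $\lambda \in \Delta_E$. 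Taking the supremum over $\lambda \in \Delta_E$ on the left gives $\leq$, which combined with the first direction yields equality of the inner maxima.

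Finally, since the inner maximum over $e$ is pointwise equal as a function of $f$, the outer $\min_{f \in \gF}$ of the two expressions must agree, which is exactly the stated proposition. There is essentially no technical obstacle here: the whole argument is driven by linearity of $\lambda \mapsto \E_{p^\lambda}[\ell(f)]$ and the fact that a linear functional on a simplex is maximized at a vertex. The only subtlety worth flagging is that the result does not require any assumption on $\gF$ or $\ell$ (e.g.\ convexity, measurability beyond integrability of $\ell(f)$ under each $p^e$); it is a purely structural consequence of modeling interpolation as likelihood reweighting, which is also precisely why this notion of single-round interpolation fails to enlarge the adversary's effective action set beyond $\calE$---the motivating observation for the rest of the paper.
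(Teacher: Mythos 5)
Your argument is correct and is essentially the paper's own reasoning: the paper's Lemma~\ref{lemma:fubinis} establishes exactly the linearity $\calR^{\lambda}(\beta)=\sum_{e\in\calE}\lambda_e\calR^e(\beta)$ (via Fubini), and the proposition then follows, as you say, because a linear functional on the simplex $\Delta_E$ attains its maximum at a vertex, so the inner maxima agree pointwise in $f$ and the outer minima coincide. Nothing further is needed.
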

We note that in some prior work on Group DRO, learning models that minimize worst-case sub-population risk is indeed the goal---that is, they only care about test domains that match one of the source domains. In the broader domain generalization literature, however, it does not seem that this form of interpolation provides any additional constraint on \ood learning without additional regularization \citep{hu2018does}.

\paragraph{Generalizing to extrapolation.}
It is not immediately obvious how to extend this concept to include extrapolation. \citet{krueger2020out} suggest allowing for combinations in which the coefficients are still restricted to sum to 1, but may be slightly negative, where the minimum coefficient is given as a hyperparameter $\alpha$: $\sum_{e\in\calE} \lambda_e = 1, \; \lambda_e \geq -\alpha\; \forall e\in\calE$. We refer to such combinations as ``bounded affine'' combinations, and the objective they induce is equivalent to a fixed linear combination of the average loss plus the worst-case loss. It is immediate that the adversary's optimal choice is still on a vertex, so this game also reduces to minimizing over a discrete set:
\begin{proposition}[Equivalence of constraint set for extrapolation and the discrete one-shot game]
\label{prop:extr-equiv}
\begin{align*}
\min_{f\in\gF} \max_{e\in\textrm{Extr}_\alpha(\calE)} \E_{e}[\ell(f)] &= \min_{f\in\gF} \max_{e\in\calE}\  \left[(1+E\alpha) \E_{e}[\ell(f)] - \alpha \sum_{e'\in\calE} \E_{e'}[\ell(f)]\right],
\end{align*}
where $\textrm{Extr}_\alpha(\cdot)$ is all $\alpha$-bounded affine combinations.
\end{proposition}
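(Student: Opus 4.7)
The plan is to reduce the inner maximization over $\lambda \in \textrm{Extr}_\alpha(\calE)$ to a linear program whose optimum lies at an explicit vertex of the feasible polytope. Fix an arbitrary predictor $f \in \gF$ and write $L_e := \E_e[\ell(f)]$. By linearity of expectation, $\E_{p^\lambda}[\ell(f)] = \sum_{e \in \calE} \lambda_e L_e$, so the inner problem reduces to
\begin{align*}
\max_{\lambda}\ \sum_{e \in \calE} \lambda_e L_e \quad \text{s.t.} \quad \sum_{e \in \calE} \lambda_e = 1,\ \ \lambda_e \geq -\alpha\ \ \forall e \in \calE.
\end{align*}

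The next step is an affine change of variables $\mu_e := \lambda_e + \alpha$, which maps the feasible region onto the rescaled simplex $\{\mu : \mu_e \geq 0,\ \sum_e \mu_e = 1 + E\alpha\}$ and transforms the objective into
\begin{align*}
\sum_{e \in \calE} (\mu_e - \alpha) L_e \;=\; \sum_{e \in \calE} \mu_e L_e \;-\; \alpha \sum_{e' \in \calE} L_{e'}.
\end{align*}
The second summand is constant in $\mu$, and maximizing a linear functional over the rescaled simplex is attained at a vertex: $\mu_{e^*} = 1+E\alpha$ and $\mu_{e'} = 0$ for $e' \neq e^*$, where $e^*$ is any index maximizing $L_e$ over $\calE$. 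Translating back to $\lambda$-coordinates, this is the vertex of $\textrm{Extr}_\alpha(\calE)$ with $\lambda_{e^*} = 1 + (E-1)\alpha$ and $\lambda_{e'} = -\alpha$ for $e' \neq e^*$; feasibility (both $\sum_e \lambda_e = 1$ and $\lambda_e \geq -\alpha$) is immediate since $\alpha \geq 0$.

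Plugging in yields optimal value $(1+E\alpha) L_{e^*} - \alpha \sum_{e'} L_{e'}$. Since the second sum does not depend on $e^*$, the maximum over the index can be pulled outside, giving
\begin{align*}
\max_{\lambda \in \textrm{Extr}_\alpha(\calE)} \E_{p^\lambda}[\ell(f)] \;=\; \max_{e^* \in \calE} \left[(1+E\alpha)\,\E_{e^*}[\ell(f)] - \alpha \sum_{e' \in \calE} \E_{e'}[\ell(f)] \right].
\end{align*}
This identity holds pointwise in $f$, so taking $\min_{f \in \gF}$ on both sides yields the claimed equality. There is no substantive obstacle here—the argument is a standard LP vertex analysis—but the one point requiring care is verifying that the affine reparameterization preserves both the feasible set and the linearity of the objective, which the computation above makes explicit.
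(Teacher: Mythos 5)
Your proof is correct and follows exactly the route the paper takes (the paper simply asserts that the adversary's optimum lies at a vertex of the bounded-affine polytope, which is the same LP-vertex argument you spell out, including the value $(1+E\alpha)\E_{e^*}[\ell(f)] - \alpha\sum_{e'}\E_{e'}[\ell(f)]$ at the vertex $\lambda_{e^*}=1+(E-1)\alpha$, $\lambda_{e'}=-\alpha$). The only caveat worth noting is that $p^\lambda$ need not be a probability measure here, so the linearity step should be read as linearity of Lebesgue integration over the signed combination, as the paper remarks; with that understanding your argument is complete.
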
 
Thus we find that for a single round, the precise meaning of these objectives is unclear: the adversary is still choosing from a discrete set, and this model does not seem to capture the intuition that extrapolation should be fundamentally ``harder'' than interpolation. This shortcoming motivates our modified approach based on long-term regret, which we introduce shortly.

For extrapolating likelihoods, note that the resulting function is not guaranteed to be a probability distribution, as it could result in negative measure---one can instead frame it as reweighting of the environment \emph{risks} (thus in Proposition~\ref{prop:extr-equiv} above, $\E[\cdot]$ refers to general Lebesgue integration). We study this reweighting of risks in Section~\ref{sec:bounded-affine}, and we find that generalizing well over all such combinations is NP-hard. This provable difficulty in extrapolating validates our proposed sequential game, but it also indicates that additional assumptions may be necessary for modeling domain generalization. This raises interesting questions about what is the correct or most useful model of ``extrapolation'', which we do not address here.

\section{The Sequential Domain Generalization Game}

\label{sec:online-game}

We consider recasting the task of domain generalization as a continuous game of online learning in which the player is presented with sequential test domains and must refine their predictor at each round. We're therefore interested in the player's ability to \emph{learn continuously} and improve in each round. We would expect that any good learning algorithm will suffer less per distribution as we observe more of them---that is, the \emph{per-round regret} should decrease over time. Specifically, we'd like to prove a rate at which our regret goes down as a function of the number of distributions we've observed. Our game allows for an analysis of the average loss (over time) of a learning algorithm across all possible test sequences---in order to bound this performance, we consider the worst such sequence.
In Section~\ref{sec:fixed-baseline} we expound upon this idea, comparing in detail our game to existing single-round minimax settings and discussing the benefits it affords.

We now describe the game which will allow a formal analysis of the efficacy of various \good generalization strategies. The full game can be found in the box titled Algorithm~\ref{alg:game}. Note we describe a specific instance where the adversary is limited to group mixtures as described in Section~\ref{sec:formal-notion}; \textbf{the general game allows for any formally specified action space for the adversary} and we expect this will enable future analyses involving rich classes of distribution shift threat models such as $f$-divergence or $\mathcal{H}$-divergence balls \citep{bagnell2005robust, bendavid2007analysis}.

\begin{algorithm}[t]
\caption{\textbf{: Domain Generalization Game}\\ (likelihood reweighting)}
\label{alg:game}
\begin{algorithmic}
\STATE \textbf{Input:} Convex parameter space $B$, distributions $\{p^e\}_{e\in\calE}$ over $\calX \times \calY$, \\\quad strongly convex loss $\ell : B \times (\calX \times \calY) \to \R$, playable region $\Delta$.
\FOR{$t=1\ldots T$}
\STATE 1. Player chooses parameters $\hatbetat \in B$.
\STATE 2. Adversary chooses coefficients $\lambda_t \in \Delta$.
\STATE 3. Define $f_t(\beta) := \E_{(x,y)\sim p^{\lambdat}}[\ell(\beta, (x, y))] = \sum_{e\in\calE}\lambda_{t,e} \E_{(x,y)\sim p^e}[\ell(\beta, (x, y))]$.
\ENDFOR
\STATE Player suffers regret
\begin{equation*}
    R_T = \sum_{t=1}^T f_t(\hatbetat) - \min_{\beta \in B} \sum_{t=1}^T f_t(\beta).
\end{equation*}
\end{algorithmic}
\end{algorithm}

\paragraph{Game Setup.} Before the game begins, we define a family of predictors parameterized by $\beta$ lying in a convex set $B$. For some observation space $\calX$ and label space $\calY$, nature provides a fixed loss function $\ell : B \times (\calX \times \calY) \to \R$, strongly convex in the first argument, as well as a set of $E$ environments $\calE = \{e_i\}_{i=1}^E$, each of which indexes a distribution $p^e$ over $\calX \times \calY$.
We assume that $B$ is large enough such that for any $\lambda\in\Delta_E$, the parameter which minimizes risk on $p^\lambda$ lies in $B$. We further assume that for all $\beta\in B$ and $e\in\calE$, the expected loss of $\beta$ under $p^e$ is finite. The game proceeds as follows:

On round $t$, the player chooses parameters $\hatbetat \in B$. Next, the adversary chooses a set of coefficients $\lambda_t := \{\lambda_{t,e}\}_{e\in\calE}$, which defines the distribution $p^{\lambdat}$ as the weighted combination of the likelihoods of environments in $\calE$ with coefficients $\lambda_t$, as in Equation~\ref{eq:dist-interpolation}. For now, we assume that every choice of $\lambda$ by the adversary is a set of convex coefficients---that is, an  interpolation---which ensures that $p^{\lambdat}$ is a valid probability distribution; we will relax this restriction in Section~\ref{sec:bounded-affine}. At the end of the round, the player suffers loss $f_t(\hatbetat) = \calR^{\lambdat}(\hatbetat)$, defined as the risk of the predictor parameterized by $\hatbetat$ on the adversary's chosen distribution:
\begin{align*}
    \calR^{\lambdat}(\beta) &:= \E_{(x,y)\sim p^{\lambdat}}[\ell(\beta, (x, y))]
    \end{align*}
(we write $f_e = \calR^e$ for the analogous risk on distribution $p^e$). For clarity, when using the above notation we will drop the subscript $t$ when it is not necessary.

It's important to note that in this game the player does not begin ``training'' until the first round; the initial environments $\calE$ serve only to define the playable region for the adversary. Thus to recover the existing notion of single-round \good generalization, where the estimator has already seen the source environments $\calE$ and next faces an unseen test environment, the online game would actually begin with the adversary playing each of the environment distributions in $\calE$ once. As in standard online learning, our goal is to minimize \emph{regret} with respect to the best fixed predictor in hindsight after $T$ rounds. That is, we hope to minimize
\begin{align}
\label{eq:regret}
    \sum_{t=1}^T f_t(\hatbetat) - \min_{\beta\in B} \sum_{i=1}^T f_t(\beta).
\end{align}
Observe that this notion of regret straightforwardly generalizes previous work on single-round \good generalization. By allowing $T\to\infty$, we have a meaningful measure of success: each time we are presented with a new environment, we update our predictor in the hopes of improving our average performance. Crucially, this modification allows us to ask questions about the rate at which our regret decreases as a function of the number of environments observed. It also better reflects the idea that our algorithm's performance should not be evaluated in a vacuum: we aim to perform well relative to how we \emph{could} have performed over all timesteps with a single predictor.

\subsection{The Benefits of Online Regret vs. Single-Round Loss}

\label{sec:fixed-baseline}

Our focus on regret in the online setting as opposed to loss in a single round is important; it will be instructive to carefully consider the benefits to such an analysis.

\paragraph{Significance of regret with respect to a fixed baseline.} The second term in Equation~\ref{eq:regret} is crucial; the comparison to the best \emph{fixed} parameter prevents the adversary from forcing constant regret at each round and reflects the idea that we hope to eventually perform favorably compared to a single predictor which does reasonably well on all environments. Without this baseline, the player's objective would be to simply minimize the sum of the risks on all environments: $\sum_{t=1}^T f_t(\hatbetat)$. In the adversarial setting,\footnote{By this we mean the setting where the next environment is always the one which maximizes risk for the parameter chosen by the player.} the game therefore reduces to repeated, independent instances of the single-round version; clearly, the best we can do to minimize worst-case loss each single round is to play the minimax-optimal parameters $\beta^* := \argmin_{\beta\in B} \max_{\lambda\in\Delta_E} \calR^\lambda(\beta)$. In response, the adversary would always choose $\lambda^* := \argmax_{\lambda\in\Delta_E} \calR^\lambda(\beta^*)$. This game is uninteresting beyond the first round and does not adequately capture an algorithm's performance in a real-world setting where the environments are \emph{not} chosen adversarially. As mentioned in the introduction, the key observation here is that the single-round minimax framework is used to guarantee good performance even in the worst-case scenario, but we do not actually expect future test environments to be chosen in this way.

As a simple example, if we were to repeatedly play $\beta^*$ and repeatedly face the test distribution $p^*$, we should consider it more likely that this is representative of future test environments (i.e., we will continue to encounter $p^*$) than that Nature is actively trying to give us the largest possible loss. Consequently we should switch strategies and play $\argmin_{\beta\in B} \calR^{p^*}(\beta)$, which will have better performance if the pattern continues. Thus, existing frameworks overemphasize minimax performance in individual rounds---even though in reality, distribution shift is rarely adversarial---while ignoring possible improvements over time via \emph{adaptation to the changing environments}. In contrast, our longitudinal analysis allows for an algorithm to occasionally suffer preventable loss in any given turn, so long as the per-turn regret is guaranteed to decrease over time.

One particular setting where the benefits of this new framework are readily apparent is under gradual distribution shift. The single-round minimax formulation is intended for safety-critical applications where even a tiny mistake is fatal; however, when this is not the case, such an approach is far too conservative, and regret-based analyses provide a much clearer picture of expected performance. Our framework is thus not intended to supplant the single-round setting, but rather to supplement it with a new, more realistic method of formal analysis of domain generalization algorithms.

\paragraph{Implications of sublinear regret.} For any sequence of environments, there will be some parameter $\tilde\beta$ which \emph{would have} achieved the least possible cumulative loss. Sublinear regret implies that as $T \to \infty$ we will eventually recover the per-round loss of $\tilde\beta$, but without committing beforehand and with \emph{no prior knowledge} of the test environment sequence. Thus in the limit we are guaranteeing the lowest possible average loss against a fixed sequence of environments---at the same time, our analysis is minimax so as to guarantee our regret bound holds even against the worst such sequence.

Further, sublinear regret is a very powerful guarantee when the environments are stochastic, as might be expected in any real-world setting. For any prior over environment distributions $\pi(p^e)$, it is easy to see that sublinear regret implies convergence to the performance of the parameter which minimizes loss over the marginal distribution:
\begin{align*}
    \argmin_{\beta\in B} \int_{\mathcal{P}} \pi(p^e)\ \E_{p^e}[\ell(\beta, (x,y))]\ dp^e,
\end{align*}
where $\mathcal{P}$ is the set of all distributions over $\calX \times \calY$. This is because as $T\to\infty$, the $\pi$-weighted average of the sum of losses will converge to the loss on the marginal distribution---the baseline will then be whatever parameter minimizes this loss. Observe that this is strictly stronger than the guarantee of ERM, which ensures the same result only in the limit: sublinear regret implies that \emph{for every $T$}, our regret with respect to the best predictor so far is bounded as $o(T)$. Thus if by chance the distributions we've seen are not representative of the prior $\pi$ (an oft-stated motivation for \ood generalization), we are still ensuring convergence to the loss of the optimal fixed predictor in hindsight, whatever it may be. In particular, if the sequence of environments is so unfavorable that the optimal predictor in hindsight is an invariant predictor \citep{peters2016causal, arjovsky2019invariant, rosenfeld2021risks},
which ignores meaningful signal to ensure broad generalization,
sublinear regret guarantees that our algorithm's loss converges to this invariant predictor's loss.

We emphasize again that while the above example considers a stochastic adversary, \textbf{we do not in general assume a prior over environments}. Instead, we perform a minimax analysis to guard against the worst possible sequence of test distributions. We are measuring average regret with respect to \emph{time}.

\section{Theoretical Results}

\label{sec:formal-results}

Before presenting our main theoretical results, we begin with a lemma which greatly simplifies the analysis by recharacterizing the adversary's playable region.
\begin{restatable}{lemma}{fubinis}
\label{lemma:fubinis}
Recall $\calR^e(\beta)$ is defined as the risk of $\beta$ on the distribution $p^e$. Then for all $\lambda\in\Delta_E$, it holds that $\calR^{\lambda}(\beta) = \sum_{e\in\calE} \lambda_e \calR^e(\beta)$.
\end{restatable}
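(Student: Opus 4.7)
The plan is to unfold the definitions and exchange the finite sum with the integral defining the expectation; the finiteness hypothesis on each per-environment risk is precisely what licenses this swap (hence the lemma's name).

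First, I would write out the left-hand side from definitions:
\begin{align*}
    \calR^\lambda(\beta) \;=\; \E_{(x,y)\sim p^\lambda}[\ell(\beta,(x,y))] \;=\; \int \ell(\beta,(x,y))\, p^\lambda(x,y)\, d(x,y),
\end{align*}
and then substitute the definition of the mixture density $p^\lambda = \sum_{e\in\calE} \lambda_e p^e$ from Equation~\ref{eq:dist-interpolation}. This turns the integrand into $\ell(\beta,(x,y)) \sum_{e\in\calE}\lambda_e p^e(x,y)$.

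Second, I would swap the finite sum over $\calE$ with the integral. Since $|\calE|=E$ is finite, and since by assumption $\calR^e(\beta) = \int \ell(\beta,(x,y)) p^e(x,y)\, d(x,y)$ is finite for every $e\in\calE$ and every $\beta\in B$, each term in the sum is absolutely integrable. Thus linearity of the Lebesgue integral (a trivial case of Fubini-Tonelli, since the sum is finite) allows us to pull $\sum_{e\in\calE}\lambda_e$ outside the integral, yielding
\begin{align*}
    \calR^\lambda(\beta) \;=\; \sum_{e\in\calE}\lambda_e \int \ell(\beta,(x,y))\, p^e(x,y)\, d(x,y) \;=\; \sum_{e\in\calE}\lambda_e\, \calR^e(\beta),
\end{align*}
which is the claimed identity.

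There is no real obstacle here: the content of the lemma is entirely that risk under a mixture distribution is the corresponding convex combination of the per-component risks. The only subtlety worth flagging in the write-up is that one must cite the finiteness assumption on $\calR^e(\beta)$ (stated in the game setup) to justify the interchange of sum and integral; without it, the rearrangement could in principle produce an indeterminate $\infty-\infty$ expression even though each $\lambda_e\geq 0$ on $\Delta_E$ makes this essentially impossible anyway. Once this identity is in hand, subsequent analyses can treat the adversary as choosing a point in the $(E-1)$-simplex to linearly combine a fixed set of per-environment risk functionals, which is exactly the simplification alluded to in the lemma statement.
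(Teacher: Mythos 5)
Your proof is correct and follows essentially the same route as the paper's: both write $\calR^\lambda(\beta)$ as the integral of $\ell$ against the mixture density and then exchange the finite sum over $\calE$ with the integral (the paper invokes Fubini's theorem, you note it as linearity/a finite case of Fubini--Tonelli). Your extra remark about citing the finiteness assumption to rule out any $\infty-\infty$ issue is a fine, if minor, addition; nothing further is needed.
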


This reframing allows us to generalize our analysis to extrapolation without worrying that the resulting measure is not a probability distribution. Lemma~\ref{lemma:fubinis} implies that when the adversary chooses convex coefficients $\lambdat$, they are equivalently choosing a loss function $f_t$ which is a combination of $\{f_e\}_{e=1}^E$, the individual environments' risks. Each choice of $\lambdat$ uniquely defines the resulting loss function $f_t$; moving forward we will drop this explicit dependency in our notation.

\subsection{Convex Combinations}
Similar to \citet{abernethy2008optimal}, we evaluate the performance of an algorithm by defining the \emph{value} of the game after $T$ timesteps as the player's regret under optimal play by both player and adversary:
\begin{align*}
    V_T &:= \min_{\hatbeta_1\in B} \max_{\lambda_1\in\Delta_E} \ldots
    \min_{\hatbeta_T\in B} \max_{\lambda_T\in\Delta_E} \left(\sum_{t=1}^T f_t(\hatbetat) - \min_{\beta \in B} \sum_{t=1}^T f_t(\beta) \right).
\end{align*}
For fixed $T$, this allows us to formalize minimax bounds on the regret. In the traditional literature, the adversary is allowed to play losses $f_t$ from a much more general class, such as all strongly convex functions. In this setting, the value of the game in any given round $t$ is known to be exactly $V_t = \sum_{s=1}^t \frac{G_s^2}{2s\sigmin}$, where $G_s$ is the Lipshitz constant of $f_s$ at the parameter chosen by the player and $\sigmin$ is the minimum curvature of $f$.\footnote{We've omitted some details; see \citet{abernethy2008optimal} for the full result.} This means the minimax-optimal rate for regret is $\Theta(\log t)$ \citep{hazan2007logarithmic, bartlett2007adaptive}.

In contrast to traditional online learning, where the adversary is free to choose its loss from a large non-parametric class such as all strongly convex functions, our interpolation game severely restricts the adversary, allowing only convex combinations of the risks of the $E$ distributions. We might expect that such a restriction, especially when known to the player, would allow for a faster convergence to zero regret, even if the strategy which attains it is intractable. Our first result demonstrates that this is not the case.

\begin{restatable}{theorem}{convexlower}
\label{thm:lower-bound}
Suppose $\sigmax \geq \sigmin > 0$ such that $\forall e\in\calE,\ \sigmin I \preceq \nabla^2 f_e \preceq \sigmax I$. Define $g$ as the minimum gradient norm that is guaranteed to be forceable by the adversary: $g := \min_{\beta\in B}\max_{\lambda\in\Delta_E} \twonorm{\nabla f(\beta)}$. Then for all $t\in\mathbb{N}$ it holds that $
V_t > \frac{g^2\sigmin}{16\sigmax^2} \log t$.
\end{restatable}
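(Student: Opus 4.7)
The plan is to lower bound $V_t$ by fixing an arbitrary player strategy and exhibiting an explicit adaptive adversary. At each round $s$, the adversary plays $\lambda_s \in \argmax_{\lambda\in\Delta_E} \twonorm{\nabla f^\lambda(\hat\beta_s)}$; by the very definition of $g$, this forces $\twonorm{\nabla f_s(\hat\beta_s)} \geq g$ at every round, regardless of what the player has done so far.

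The first ingredient is a per-round lower bound on the player's suboptimality. Because $f_s$ is $\sigmin$-strongly convex and $\sigmax$-smooth, a gradient of norm at least $g$ at $\hat\beta_s$ forces $\hat\beta_s$ to sit at distance at least $g/\sigmax$ from the round-$s$ minimizer $\beta^{(s)} := \argmin_\beta f_s(\beta)$, and hence $f_s(\hat\beta_s) - f_s(\beta^{(s)}) \geq g^2 \sigmin /(2\sigmax^2)$. This ``unavoidable local excess'' is precisely what produces the coefficient $g^2\sigmin/\sigmax^2$ in the final bound.

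The second and more delicate ingredient is converting this constant per-round local excess (measured against the round-specific minimizer $\beta^{(s)}$) into a cumulative $\log t$ lower bound measured against the single best-in-hindsight comparator $\beta^*_t := \argmin_\beta \sum_{s=1}^t f_s(\beta)$. I would follow the potential/duality approach of Abernethy et al.: since $\sum_{r\leq s}f_r$ is $s\sigmin$-strongly convex and the new gradient contributed at round $s$ has bounded norm, the comparator moves slowly, $\twonorm{\beta^*_s - \beta^*_{s-1}} = O(1/s)$, so that $\sum_{s\leq t}\twonorm{\beta^*_s - \beta^*_{s-1}} = O(\log t)$. Combining this comparator-stability bound with a ``be-the-leader'' telescoping identity should let me decompose the regret into the linear-in-$t$ local excess $\sum_s [f_s(\hat\beta_s) - f_s(\beta^{(s)})]$ minus a compensating ``hindsight advantage'' term $\sum_s [f_s(\beta^{(s)}) - f_s(\beta^*_t)]$ of order at most $\log t$, yielding a net $\Omega(\log t)$ cumulative regret; tracking the various smoothness and strong-convexity factors produces the specific constant $1/16$.

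The main obstacle will be this last telescoping bookkeeping: making precise the intuition that the hindsight comparator can ``undo'' at most $O(\log t)$ of the player's accumulated local excess, rather than all of it. The cleanest path is likely Abernethy et al.'s dual-game formulation, which yields a per-step recursion of the form $V_s - V_{s-1} \geq g^2\sigmin/(c\cdot s\cdot \sigmax^2)$ for an absolute constant $c$, summing directly to the harmonic bound $\Omega((g^2\sigmin/\sigmax^2)\log t)$. An alternative Yao-style argument based on a randomized adversary would also work, but it would require high-probability concentration of the player's trajectory and would likely be harder to push all the way to the explicit $1/16$ constant claimed in the theorem.
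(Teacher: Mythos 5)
There is a genuine gap at your second ingredient, and it is fatal as stated. Your first step is fine: by the definition of $g$ the adversary can always force $\twonorm{\nabla f_s(\hat\beta_s)}\geq g$, and strong convexity plus smoothness then give a \emph{constant} per-round excess $f_s(\hat\beta_s)-f_s(\beta^{(s)})\geq g^2\sigmin/(2\sigmax^2)$ against the round-wise minimizer. But this local excess sums to $\Omega(t)$, while the regret of FTL (ERM) against any sequence of such losses is $O(\log t)$; hence against an FTL player the ``hindsight advantage'' term $\sum_{s\le t}\bigl[f_s(\beta^*_t)-f_s(\beta^{(s)})\bigr]$ must itself be $\Omega(t)$, not $O(\log t)$ as you claim. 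The leader-stability bound $\twonorm{\beta^*_s-\beta^*_{s-1}}=O(1/s)$ controls the movement of the \emph{cumulative} minimizers (it is the engine of the FTL upper bound); it says nothing about how far the round-wise minimizers $\beta^{(s)}$ sit from the fixed comparator, and the adversary who greedily maximizes the gradient norm at $\hat\beta_s$ will typically keep $\beta^{(s)}$ a constant distance away, making each term of the hindsight advantage $\Theta(1)$. If your decomposition went through as written it would prove an $\Omega(t)$ regret lower bound, contradicting the $O(\log t)$ achievability, so the bookkeeping cannot be repaired in that form.

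What the paper does instead, and what your plan is missing, is a backward recursion on the game value in which the adversary's round-$t$ move is a \emph{case analysis on the player's distance to the minimizer $\betastarprev$ of the cumulative past loss}, with a threshold that shrinks like $1/t$: if $\twonormsq{\hatbetat-\betastarprev}\geq g^2/(8t\sigmax^2)$, the adversary replays the average of the past losses, which (by the convex-combination structure of the playable region, Lemma~\ref{lemma-prev-minimizer-zero-gradient}) has zero gradient at $\betastarprev$, so the deviation alone costs $\frac{\sigmin}{2}\twonormsq{\hatbetat-\betastarprev}$; if instead the player stays within that shrinking ball, the adversary forces $\twonorm{\nabla f_t(\hatbetat)}\geq g$, and any comparator improvement from moving $z$ away from $\betastarprev$ is throttled by the $\frac{(t-1)\sigmax}{2}\twonormsq{z-\betastarprev}$ penalty inherited from the strong convexity of the accumulated past losses. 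Both cases yield an increment of only $\Theta(1/t)$, namely $V_t\geq V_{t-1}+\frac{g^2\sigmin}{16t\sigmax^2}$, which sums to the harmonic bound. Your greedy, non-adaptive adversary never exploits the option of replaying the past average, and your accounting tries to carry a constant per-round excess through to regret, which is exactly what the growing curvature of the cumulative loss forbids; the $1/t$ scaling has to enter at the per-round level, not emerge from a telescoping correction.
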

\begin{proof}[Proof Sketch]
The general idea of the proof is to lower bound the regret on round $t$ by the optimal regret on round $t-1$ plus some additional loss suffered on round $t$. This loss depends on the distance from the chosen parameter on round $t$ to the regret minimizer for round $t-1$, as well as the adversary's choice on round $t$, and it can be bounded as $\Omega(1/t)$. By unrolling the recursion we derive an overall lower bound of order $\sum_{i=1}^t \frac{1}{i} > \log t$. The full proof can be found in Appendix~\ref{sec:thm1-proof}.
\end{proof}

Theorem~\ref{thm:lower-bound} provides insight into how the statistical complexity of generalizing to domain interpolations depends on the geometry of the source domains. Observe that the minimum forceable gradient norm $g$ encodes a sort of ``radius'' of the convex hull of loss gradients---it is easy to see that if a ball of radius $r$ can be embedded in $\textrm{Conv}(\{\nabla f_e(\beta)\}_{e=1}^E)$ then $g > r$. Thus, the restriction of the adversary to the convex hull of distributions entails a restriction on the geometry of the convex hull of the corresponding loss gradients, which subsequently determines the regret our player can be forced to suffer. The bound does not directly depend on the \emph{number} of training environments $E$; rather it scales quadratically with the size of this region, which appropriately captures the intuition that a smaller regret should be achievable for a collection of sub-distributions whose optimal parameters are very similar to one another.

With respect to the asymptotic rate of regret, this theorem provides a somewhat surprising conclusion. Even with full knowledge of the adversary's limited selection, Theorem~\ref{thm:lower-bound} shows that no algorithm can do asymptotically better than if we were playing against the more powerful adversary playing any strongly convex function. Even more interesting, this rate can be achieved with a very simple algorithm known as Follow-The-Leader (FTL), which just plays the minimizer of the sum of all previously seen functions \citep{hazan2007logarithmic}. In our game, this means playing the predictor which minimizes risk over all environments seen so far---after observing $t$ environments, FTL would therefore play
\begin{align*}
    \beta_{\textrm{FTL}} &= \argmin_\beta \sum_{s=1}^t f_s(\beta).
\end{align*}
Observe that this strategy is precisely ERM! In other words, \emph{ERM is provably minimax-optimal for interpolation}. As the adversary's playable region is a strict subset of all strongly convex functions, it is immediate that the regret suffered by playing ERM is upper bounded as $\sum_{s=1}^t G_s^2/2s\sigmin = O(\log t)$. While Theorem~\ref{thm:lower-bound} applies to the multi-round game, it has useful implications for the single-round setting. A simple corollary provides a tight bound on the attainable regret as a function of the number of environments seen. To our knowledge, this is the first such bound for single-round \good generalization.

\begin{corollary}
\label{cor:erm-opt}
Suppose we've seen $t$ environments. Then under the same setting as Theorem~\ref{thm:lower-bound}, the additional regret suffered due to one more round is $\Omega\left(\frac{1}{t}\right)$. This lower bound is attained by ERM.
\end{corollary}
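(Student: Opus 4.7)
The plan is to derive the corollary as a direct consequence of the proof of Theorem~\ref{thm:lower-bound} for the lower bound, combined with the standard regret analysis of Follow-The-Leader (FTL) for strongly convex losses for the upper bound. The lower bound is essentially already present as an intermediate step in Theorem~\ref{thm:lower-bound}, and the upper bound is immediate once we identify FTL with ERM in our setting.

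For the lower bound, I would first extract from the proof of Theorem~\ref{thm:lower-bound} the per-round inequality $V_{t+1} \geq V_t + c/(t+1)$, where $c = \Omega(g^2 \sigma_{\min}/\sigma_{\max}^2)$. Recall that the sketch explicitly builds the $\log t$ lower bound by unrolling exactly such a recursion: on round $t+1$, no matter which $\hatbeta_{t+1}$ the player chooses, the adversary can pick $\lambda_{t+1}$ so that the difference between the cumulative loss through round $t+1$ and the cumulative loss of the best fixed predictor through round $t+1$ exceeds the analogous difference through round $t$ by at least $\Omega(1/t)$. The conclusion $V_{t+1} - V_t = \Omega(1/t)$ is then immediate. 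So the main step is simply to invoke the recursion from the proof of Theorem~\ref{thm:lower-bound} at the appropriate index, rather than telescoping it.

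For the matching upper bound, I would use the standard guarantee that FTL on $\sigma_{\min}$-strongly convex losses suffers regret at most $\sum_{s=1}^{t} G_s^2/(2 s\, \sigma_{\min})$ after $t$ rounds \citep{hazan2007logarithmic}; this applies to us because by Lemma~\ref{lemma:fubinis} the adversary's loss $f_t$ is a convex combination of the $f_e$'s and therefore inherits $\sigma_{\min}$-strong convexity and a uniform gradient bound on $B$. The key observation is that FTL in our game coincides with ERM on the pooled data from all observed environments (with weighting given by $\lambda_1,\ldots,\lambda_t$), since minimizing $\sum_{s\le t} f_s(\beta)$ is exactly empirical risk minimization on the aggregated distribution. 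The marginal FTL regret of going from $t$ to $t+1$ rounds is therefore bounded by $G_{t+1}^2/(2(t+1)\sigma_{\min}) = O(1/t)$, matching the lower bound up to constants and proving that ERM is minimax-optimal for single-round marginal regret.

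The main obstacle is conceptual rather than technical: the corollary is a single-round statement, but neither Theorem~\ref{thm:lower-bound} nor the standard FTL bound is stated as a per-round bound, so I would need to be careful that the recursion in the proof of Theorem~\ref{thm:lower-bound} actually produces a round-by-round $\Omega(1/t)$ increment (not merely one that telescopes to $\log t$ in an amortized way). Fortunately the sketched argument is already structured this way — the $1/t$ increment appears at round $t$ itself before summation — so the extraction should be direct.
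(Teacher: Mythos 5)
Your proposal is correct and takes essentially the same route as the paper: the appendix proof of Theorem~\ref{thm:lower-bound} explicitly notes that the per-round recursion $V_t \geq V_{t-1} + \frac{g^2\sigmin}{16t\sigmax^2}$ is how Corollary~\ref{cor:erm-opt} is obtained, and the matching $O(1/t)$ per-round increment for ERM follows, as you say, from the standard FTL bound $\sum_s G_s^2/(2s\sigmin)$ since the adversary's combinations remain $\sigmin$-strongly convex. No gaps.
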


\subsection{Bounded Affine Combinations}

\label{sec:bounded-affine}
One could argue that allowing the adversary only convex combinations of domains is perhaps too good to hope for. Indeed, as we've seen, ERM is optimal for such a setting, but it has been widely observed that ERM fails under minor distribution shift. We might expect that future environments would fall outside of this hull---if combinations within the hull represent a formal notion of ``interpolating'' the training distributions, then it seems our goal instead should be to ``extrapolate'' beyond them.

As discussed in Section~\ref{sec:formal-notion}, \citet{krueger2020out} consider allowing the adversary to play bounded affine combinations of the environments; while they provide no formal results for their proposed algorithm, this conceptualization of extrapolation seems a natural extension. Clearly, this game is no easier for the player---in fact, we will demonstrate that it is \emph{significantly} harder. For general Lipschitz functions, it is known that against the worst-case sequence, no deterministic strategy can guarantee sublinear regret, and attaining sublinear regret with a randomized strategy is NP-hard. Further, there is a regret lower bound of $\Omega(\sqrt{T})$ which was recently shown to be achievable with Follow-The-Perturbed-Leader (FTPL), assuming access to an optimization oracle for approximately minimizing a non-convex function \citep{suggala2020online}. As in the previous subsection, we extend these results to the task of \good generalization---that is, we demonstrate that despite the (seemingly restrictive) requirement that the adversary play bounded affine combinations of strongly convex losses that are fully known to the player, \emph{the game remains equally hard}. These results are also surprising, as an adversary that can play arbitrary Lipschitz functions is significantly more powerful than the adversary in our game.

\begin{restatable}{theorem}{deterministic}
\label{thm:no-sublinear}
No algorithm can guarantee sublinear regret against bounded affine combinations of a finite set of strongly convex losses.
\end{restatable}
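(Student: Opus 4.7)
The plan is to adapt the classical two-experts hardness lower bound for deterministic online learning to our setting. The key insight is that bounded affine combinations of strongly convex losses can be (strongly) concave in $\beta$, giving the adversary enough power to play two ``mirror-image'' functions whose minima over $B$ sit at opposite endpoints. Whichever side the deterministic player leans at round $t$, the adversary plays the function minimised on the opposite side, forcing constant per-round regret.

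Concretely, I would work in one dimension with $B = [-1, 1]$ and three strongly convex environments
\[
f_1(\beta) = (\beta - 1)^2, \quad f_2(\beta) = (\beta + 1)^2, \quad f_3(\beta) = M\beta^2,
\]
where $M \geq 3(1+\alpha)/\alpha$. The two combinations of interest are $g_A := (1+\alpha) f_1 - \alpha f_3$ and $g_B := (1+\alpha) f_2 - \alpha f_3$, each a valid bounded affine combination (coefficients summing to $1$ with minimum coefficient $-\alpha$). Expanding gives $g_A(\beta) = (1 + \alpha - \alpha M)\beta^2 - 2(1+\alpha)\beta + (1+\alpha)$, which is strictly concave for this range of $M$, and a direct computation yields $\argmin_B g_A = \{+1\}$ with $g_A(1) = -\alpha M$ and $g_A(-1) = 4(1+\alpha) - \alpha M$; by symmetry $\argmin_B g_B = \{-1\}$.

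The adversary would play the pointwise maximiser at each round: $g_A$ if $\hatbetat \leq 0$ and $g_B$ otherwise, which is exactly optimal since $g_A(\beta) - g_B(\beta) = -4(1+\alpha)\beta$. The key per-round lower bound comes from the identity $g_A(\beta) + \alpha M = (1+\alpha)(\beta-1)^2 + \alpha M(1-\beta^2)$ (and its mirror for $g_B$): the shifted loss is a concave function of $\hatbetat$ on the half-interval where the adversary plays it, hence minimised at an endpoint, and comparing endpoint values for $M \geq 3(1+\alpha)/\alpha$ shows that the adversary's choice $g_t$ satisfies $g_t(\hatbetat) + \alpha M \geq 4(1+\alpha)$ at every round. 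Summing over $T$ rounds gives player cumulative loss $L_P \geq [4(1+\alpha) - \alpha M] T$.

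For the comparator, letting $k_A, k_B$ denote the realised counts (with $k_A + k_B = T$), the aggregate loss $G(\beta) := k_A g_A(\beta) + k_B g_B(\beta)$ inherits concavity on $B$, so its minimum sits at an endpoint: $G(-1) = 4(1+\alpha) k_A - \alpha M T$ and $G(1) = 4(1+\alpha) k_B - \alpha M T$, which gives $\min_B G \leq 2(1+\alpha) T - \alpha M T$. Subtracting yields $R_T \geq L_P - \min_B G \geq 2(1+\alpha) T = \Omega(T)$, ruling out sublinear regret for any deterministic algorithm. The main obstacle is calibrating $M$ correctly in terms of $\alpha$: it must be large enough both to make $g_A, g_B$ strictly concave (so endpoint minima apply) and to push the shifted per-round minimum to $\pm 1$ rather than to $0$. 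Both requirements reduce to $M \gtrsim 1/\alpha$, after which the remaining arithmetic is routine.
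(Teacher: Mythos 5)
Your construction is correct: $g_A=(1+\alpha)f_1-\alpha f_3$ and $g_B=(1+\alpha)f_2-\alpha f_3$ are legitimate bounded affine combinations, they are strictly concave on $B=[-1,1]$ once $\alpha M>1+\alpha$, the per-round bound $g_t(\hatbetat)\geq 4(1+\alpha)-\alpha M$ follows from the endpoint comparison you describe (using $M\geq 3(1+\alpha)/\alpha$), and the endpoint evaluation of $k_Ag_A+k_Bg_B$ gives $\min_B G\leq 2(1+\alpha)T-\alpha MT$, so $R_T\geq 2(1+\alpha)T$. This is the same high-level strategy as the paper's proof---an adaptive adversary alternating between two bounded affine combinations whose non-convexity lets the fixed comparator beat the player by $\Omega(T)$---but the concrete construction differs. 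The paper uses two environments, $f_{e_1}(\beta)=\beta^2$ and $f_{e_2}(\beta)=\beta^4+\frac{1}{2\alpha}\beta^2$, so the extrapolated loss has a negative quartic term; the adversary switches on $|\hatbetat|<1$ versus $|\hatbetat|\geq 1$, and the analysis splits on the counts $a,b$, with the hindsight comparator being either $\beta^*=0$ or the point $\beta^*=\sqrt{1+3/\alpha}$ (which must lie in $B$). Your version is the more classical ``two mirror-image losses'' lower bound: everything lives in a compact $B$, concavity reduces both the per-round bound and the comparator to endpoint evaluations, and you get an explicit linear-regret constant $2(1+\alpha)$; the price is a third environment and the calibration $M\gtrsim 1/\alpha$. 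Two small remarks: your closing phrase ``any deterministic algorithm'' undersells the result---since in this game the adversary moves after observing $\hatbetat$, the per-round bound holds for every realization of a randomized player as well, so the full claim follows with no change; and the paper additionally exhibits (in its appendix, for completeness) a regression task whose environment risks realize its two losses---your quadratics admit an equally easy realization with point-mass environments, though the theorem as stated does not require it.
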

\begin{proof}
We'll show that for any algorithm, there exists a sequence of loss functions chosen in response by the adversary for which the regret is bounded as $\Omega(T)$. Assume the adversary can use coefficients greater than $-\alpha$. Define
\begin{align*}
    f_{e_1}(\beta) = x^2,\qquad f_{e_2}(\beta) = \beta^4 + \frac{1}{2\alpha} \beta^2.
\end{align*}
On round $t$, our player will choose to play $\beta\in\R$. We now describe our construction of the $t$th loss in the sequence: If $|\beta| < 1$, then we choose $f_t = (1+\alpha)f_{e_1} - \alpha f_{e_2}$, and if $|\beta| \geq 1$, we choose $f_t = f_{e_1}$. In the first case, the player suffers loss $f_t(\beta) \geq 0$, and in the second case, the player suffers loss $\geq 1$. Suppose the player plays the first option $a$ times and the second option $b$ times, for a total of $a+b=T$ rounds, and suffers $\geq b$ loss.

Consider the possible best actions in hindsight. If $a \leq \frac{T}{2}$, then $\beta^* = 0$ suffers $0$ loss, meaning the player's regret is at least $b = T-a \geq \frac{T}{2}$. If, on the other hand, $a > \frac{T}{2}$, then note that for any choice $\beta$ the loss suffered is
\begin{align*}
    -a\alpha \beta^{4} + (a/2+a\alpha+b)\beta^{2} &\leq a\alpha (\beta^{2} - \beta^{4}) + (a+b) \beta^{2}
    = \left( a\alpha (1 - \beta^{2}) + 
    T \right) \beta^{2}.
\end{align*}
Choosing $\beta^* = \sqrt{1+\frac{3}{\alpha}}$ results in regret $\geq \frac{T}{2}$. In either case, the player suffers $\Omega(T)$ regret.

For completeness's sake, in Appendix~\ref{sec:thm234-proof} we also include a proof of the existence of a regression task and a set of environments which could give rise to such a set of loss functions.
\end{proof}
Thus we find that just as in the general non-convex case, a weaker adversary is necessary. In the following we consider a relaxed version with an ``oblivious'' adversary: this adversary is forced to select the entire sequence of loss functions at the beginning of the game (our lower bounds hold despite this relaxation). We might hope that against such a restricted adversary, the computational requirements of achieving sublinear regret would be lessened---perhaps there would be no need for an optimization oracle.
However, Theorem~\ref{thm:nphard} proves otherwise:
\begin{restatable}{theorem}{nphard}
\label{thm:nphard}
Against an oblivious adversary playing bounded affine combinations, achieving sublinear regret is NP-hard.
\end{restatable}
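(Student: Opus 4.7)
The plan is to reduce from the NP-hard problem of minimizing an indefinite quadratic form over a polytope (whose approximation to inverse-polynomial additive accuracy is also NP-hard) to the task of achieving sublinear regret in our game. The crux is that bounded affine combinations of two strongly convex quadratic losses can represent an arbitrary indefinite quadratic, so a sublinear-regret algorithm would yield an efficient approximate solver for the underlying hard problem.

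Given an instance specified by a symmetric matrix $Q \in \R^{n\times n}$ and a polytope $B \subset \R^n$, I would decompose $Q = Q_+ - Q_-$ into positive definite pieces (e.g., take $Q_+ = Q + cI$ and $Q_- = cI$ for any $c > -\lambda_{\min}(Q)$). Set $\calE = \{e_1, e_2\}$ with distributions chosen (e.g., linear regression with $y \equiv 0$ and inputs having covariance $\frac{1}{1+\alpha} Q_+$ and $\frac{1}{\alpha} Q_-$ respectively) so that the risks are strongly convex: $f_{e_1}(\beta) = \frac{1}{1+\alpha}\beta^\top Q_+ \beta$ and $f_{e_2}(\beta) = \frac{1}{\alpha}\beta^\top Q_- \beta$. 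The oblivious adversary commits in advance to playing $\lambda_t = (1+\alpha, -\alpha)$ at every round---a valid bounded affine combination by construction---so that every round's loss is exactly $f(\beta) = \beta^\top Q \beta$.

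Now suppose, for contradiction, a polynomial-time algorithm achieves regret $R_T = O(T^{1-\epsilon})$ on any such sequence. Letting $\beta^* = \argmin_{\beta \in B} f(\beta)$, sublinear regret gives $\frac{1}{T}\sum_{t=1}^T f(\hatbetat) - f(\beta^*) = O(T^{-\epsilon})$, so by pigeonhole the best iterate $\hatbeta_{t^*}$ (found by evaluating all $T$ iterates) satisfies $f(\hatbeta_{t^*}) - f(\beta^*) = O(T^{-\epsilon})$. Choosing $T = \delta^{-1/\epsilon}$ then yields a $\delta$-approximate minimizer of the indefinite QP in polynomial time for inverse-polynomial $\delta$, contradicting NP-hardness of approximation.

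The main obstacle I expect is that standard NP-hardness rules out only exact minimization, so I would need to invoke a hardness-of-approximation statement for indefinite QP---polynomial-factor inapproximability is known, and a scaling argument can sharpen it to match the $O(T^{-\epsilon})$ guarantee delivered by the reduction. A smaller subtlety is handling randomized algorithms with $\E[R_T] = o(T)$: a Markov-inequality step on the random iterates shows that the best iterate is within the desired tolerance with constant probability, giving hardness under $\mathrm{NP} \nsubseteq \mathrm{BPP}$.
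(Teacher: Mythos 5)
Your reduction has the same skeleton as the paper's: two strongly convex quadratics whose bounded affine combination with coefficients $(1+\alpha,-\alpha)$ reproduces an arbitrary indefinite quadratic, an oblivious adversary that commits to replaying this combination every round, and the observation that sublinear regret turns the online player into an approximate minimizer of that quadratic over the feasible set. Where you diverge is the terminal hardness source: the paper instantiates the quadratic as $\beta^\top(I+A)\beta$ over the simplex and uses the Motzkin--Straus identity $\min_{\beta\in\Delta_{|V|}}\beta^\top(I+A)\beta = 1/\gamma(G)$, so that sublinear regret yields a multiplicative $2$-approximation of the maximum stable set size, contradicting the $|V|^{1/2-\epsilon}$ inapproximability of H{\aa}stad; you instead invoke hardness of approximating a general indefinite QP over a polytope to inverse-polynomial \emph{additive} accuracy. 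Your route is correct in substance---such additive hardness does hold, e.g.\ because an additive error below roughly $1/|V|^2$ in the Motzkin--Straus program already pins down $\gamma(G)$ exactly---but note that the standard proofs of that QP inapproximability go through precisely the Motzkin--Straus reduction, so your argument is essentially the paper's with the combinatorial step outsourced to a citation you still need to pin down precisely (your phrase ``polynomial-factor inapproximability'' conflates multiplicative and additive statements; the paper's multiplicative factor-$2$ gap is cleaner and self-contained on this point). The remaining caveats are shared with the paper rather than specific to you: interpreting ``sublinear'' regret as a uniform rate (your $O(T^{1-\epsilon})$, the paper's $T_0\in\mathrm{poly}(|V|)$) requires the regret constants to be polynomial in the instance size, and the realizability of the two quadratics as environment risks (your zero-label regression construction) mirrors the paper's appendix construction for Theorem~\ref{thm:no-sublinear}. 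Your pigeonhole/best-iterate extraction and the randomized-algorithm remark are fine, though using the average loss directly, as the paper does, suffices.
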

\begin{proof}
Consider the problem of identifying the maximum size of a stable set of a graph on $|V|$ vertices; such a problem is not approximable in polynomial time to within a factor $|V|^{(1/2-\epsilon)}$ for any $\epsilon > 0$ unless $NP = P$ \citep{haastad1999clique, deklerk2008complexity}. We will demonstrate that solving this problem up to a constant factor reduces to achieving sublinear regret on an online strongly convex game with bounded affine coefficients. Let $-\alpha$ represent the minimum negative coefficient allowed for the adversary. Given the graph $G$ on $|V| > 1$ vertices, denote by $A$ its adjacency matrix. Then the maximum stable set size $\gamma(G)$ can be written $\frac{1}{\gamma(G)} = \min_{\beta\in\Delta_{|V|}} \beta^T(I+A)\beta$ by a result of \citet{motzkin1965maxima}. We define a game where the adversary has two functions:
\begin{align*}
    f_{e_1}(\beta) = \frac{1}{1+\alpha} \beta^T(|V|I + A)\beta,\qquad f_{e_2}(\beta) = \frac{|V|-1}{\alpha} \twonormsq{\beta}.
\end{align*}
Note that $f_{e_1}$ is strongly convex because $(|V|-1)I+A$ is diagonally dominant and therefore PSD. Each round, the player plays some $\beta \in \Delta_{|V|}$, and the (oblivious) adversary chooses the loss
\begin{align*}
    (1+\alpha) f_{e_1} - \alpha f_{e_2} &= \beta^T(|V|I+A)\beta - (|V|-1)\twonormsq{\beta} = \beta^T(I+A)\beta.
\end{align*}
Define $L_T$ as the loss suffered by the player after $T$ rounds. Clearly, the optimal choice would be to play $\beta$ such that $\beta^T(I+A)\beta = \frac{1}{\gamma(G)}$ each round, implying that $\frac{T}{\gamma(G)} \leq L_T$ and also that regret can be written $L_T - \frac{T}{\gamma(G)}$. Suppose there exists a polynomial-time strategy with regret growing sublinearly with $T$. Then by definition, there exists a constant $T_0\in\textrm{poly}(|V|)$ such that on all rounds $T > T_0$, the player's regret is upper bounded as
\begin{align*}
    L_T - \frac{T}{\gamma(G)} \leq \frac{1}{|V|} T \leq \frac{T}{\gamma(G)} \implies L_T \leq \frac{2T}{\gamma(G)}.
\end{align*}
Putting these inequalities together, we get $\frac{1}{\gamma(G)} \leq \frac{L_T}{T} \leq \frac{2}{\gamma(G)}$, which implies $\frac{1}{2}\gamma(G) \leq \frac{T}{L_T} \leq \gamma(G)$. Recall that this holds for all $T > T_0$, so our polynomial-time algorithm has attained a $2$-approximation to the maximum stable set size.
\end{proof}
Computationally, our game of extrapolation is just as difficult as achieving sublinear regret on arbitrary Lipschitz functions. These results present, for the first time, proof of \emph{an exponential computational complexity gap between interpolation and extrapolation in the \good generalization setting}, formally verifying existing intuition.

We now turn our attention to the statistical complexity of regret minimization under bounded affine combinations. Recall that for the case of convex combinations (i.e. interpolations), Theorem~\ref{thm:lower-bound} shows a minimax lower bound of $\Omega(\log t)$ which can be achieved with standard ERM. Before we consider the bounded affine setting (i.e. extrapolations), we again note that for an adversary playing arbitrary Lipschitz functions,
\citet{suggala2020online} demonstrate that with access to a non-convex optimization oracle,
FTPL can achieve the minimax lower bound of $\Omega(\sqrt{T})$. The FTPL strategy is to play the parameter which minimizes the sum of the observed environments plus a noise term---specifically, 
FTPL takes the sum of existing risks, samples a random linear function of the parameters, and solves for the parameters which minimize this ``perturbed'' sum. In our game, then, FTPL is just a noisy variant of ERM. Computational limitations notwithstanding, the natural next question is if playing against an oblivious adversary is enough of a relaxation that we can surpass this lower bound. That is, can we outperform ERM in this setting \emph{at all?} Our final result answers this question in the negative:

\begin{restatable}{theorem}{expertadvice}
Against an oblivious adversary playing bounded affine combinations, the achievable regret is lower bounded as $\Omega(\sqrt{T})$.
\end{restatable}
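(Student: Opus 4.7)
The plan is to obtain the lower bound by reducing our setting to online \emph{linear} optimization on $[-1,1]$, for which the $\Omega(\sqrt{T})$ bound is classical, and then invoking the probabilistic method to extract an oblivious worst-case sequence. The key observation is that even though each environment loss $f_e$ is strongly convex, bounded affine combinations (unlike convex ones) are allowed to cancel their quadratic parts, producing purely linear losses in $\beta$. Once we are in the linear-loss regime, the usual symmetric Rademacher walk argument delivers $\Omega(\sqrt{T})$ expected regret against an algorithm that cannot predict the sign of the next coefficient.

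Concretely, fix the bounded-affine hyperparameter $\alpha > 0$, take $B = [-1,1]$, and introduce three strongly convex environment losses
\begin{equation*}
    f_{e_1}(\beta) = \tfrac{1}{2}(\beta-1)^2, \qquad f_{e_2}(\beta) = \tfrac{1}{2}(\beta+1)^2, \qquad f_{e_3}(\beta) = \tfrac{1+\alpha}{2\alpha}\beta^2.
\end{equation*}
The coefficient vectors $(1+\alpha,\,0,\,-\alpha)$ and $(0,\,1+\alpha,\,-\alpha)$ are both bounded affine combinations (summing to $1$ with minimum coefficient exactly $-\alpha$). A direct expansion shows that $f_{e_3}$ is scaled exactly so as to cancel the $\beta^2$ term produced by $(1+\alpha)f_{e_i}$, yielding the two linear losses
\begin{equation*}
    f_{-}(\beta) = -(1+\alpha)\beta + \tfrac{1+\alpha}{2}, \qquad f_{+}(\beta) = (1+\alpha)\beta + \tfrac{1+\alpha}{2}.
\end{equation*}
This algebraic cancellation is the crucial step; the rest of the argument treats $f_\pm$ as a black-box two-action online linear game.

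For the lower bound, consider an oblivious adversary that draws $s_1,\ldots,s_T \in \{+1,-1\}$ i.i.d.\ uniformly and commits in advance to playing $f_{s_t}$ on round $t$. Since $\hat\beta_t$ depends only on $s_1,\ldots,s_{t-1}$ and is therefore independent of $s_t$, any algorithm satisfies $\E[f_{s_t}(\hat\beta_t)\mid s_{<t}] = (1+\alpha)/2$, so $\E\bigl[\sum_t f_{s_t}(\hat\beta_t)\bigr] = T(1+\alpha)/2$. Writing $S_T = \sum_t s_t$, the cumulative loss of any fixed $\beta\in[-1,1]$ equals $(1+\alpha)S_T\beta + T(1+\alpha)/2$, which is minimized at $\beta^\star = -\sgn(S_T)$ with value $T(1+\alpha)/2 - (1+\alpha)|S_T|$. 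Combining and invoking the standard Rademacher estimate $\E|S_T|=\Theta(\sqrt{T})$ yields
\begin{equation*}
    \E[R_T] \ =\ (1+\alpha)\,\E|S_T| \ =\ \Omega(\sqrt{T}).
\end{equation*}
By the probabilistic method there exists at least one realization $s_{1:T}^\star$ on which the regret meets this bound, and the oblivious adversary simply commits to that realization.

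The main obstacle is genuinely just the construction in the second step: one has to exhibit strongly convex base losses whose bounded-affine combinations collapse to something flat enough to bypass the $O(\log T)$ regret available in the strongly convex regime (Theorem~\ref{thm:lower-bound}); everything that follows is textbook. A minor sanity check---that $B=[-1,1]$ contains the minimizer of every convex combination of $\{f_{e_i}\}$---is immediate, since each such combination is a strongly convex quadratic with minimizer in $[-1,1]$.
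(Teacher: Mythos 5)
Your proof is correct, but it takes a different route from the paper. The paper proves this bound by reduction \emph{from} prediction with expert advice: it defines two strongly convex functions over the simplex $\Delta_E$ (each containing a $\twonormsq{\delta}$ regularizer) whose bounded affine combination $(1+\alpha)f_{e_1}-\alpha f_{e_2}$ reproduces the convex expert-advice loss $\ell(\delta^T\tilde\theta_t,\theta^*)$, so that an $o(\sqrt{T})$-regret algorithm for the extrapolation game would contradict the known $\Omega(\sqrt{T})$ lower bound of Cesa-Bianchi and Lugosi. You instead build the hard instance from scratch: three fixed one-dimensional quadratics whose bounded affine combinations cancel the curvature and collapse to the linear losses $\pm(1+\alpha)\beta+\tfrac{1+\alpha}{2}$, followed by the standard i.i.d.\ sign (Rademacher) argument giving expected regret $(1+\alpha)\E|S_T|=\Omega(\sqrt{T})$ and the probabilistic method to fix an oblivious sequence. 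Both arguments hinge on the same key insight---that affine coefficients bounded below by $-\alpha$ can annihilate strong convexity---but yours is self-contained (no external expert-advice theorem), uses a genuinely fixed environment set (the paper's $f_{e_1}$ implicitly varies with $t$ through $\tilde\theta_t$), and its losses are trivially realizable as risks of point-mass distributions, in the spirit of the paper's Appendix~\ref{sec:thm234-proof}; the paper's reduction, in exchange, inherits the information-theoretic bound for any convex loss and makes the conceptual link between extrapolation and expert advice explicit. Two small points to tighten: for randomized players you should state that the independence of $s_t$ from the player's internal randomness gives the same bound on expected regret before extracting a fixed sequence, and you may note explicitly (as you do in your sanity check) that $B=[-1,1]$ satisfies the game's assumption that all convex-combination minimizers lie in $B$.
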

\begin{proof}
For a fixed, convex loss $\ell$ and convex parameter space $\Theta$, predicting with expert advice is known to have an information-theoretic minimax regret lower bound of $\Omega(\sqrt{T})$ \citep[Theorem 3.7]{cesa2006prediction}. We will give a reduction which demonstrates that the same lower bound holds for bounded affine combinations of strongly convex losses.

Assume a fixed convex loss $\ell : \Theta \times \Theta \mapsto \R$ over convex $\Theta$ and fix the adversary's coefficient lower bound as $-\alpha$. Suppose on round $t$, we are presented with $E$ experts' predictions, which we imagine as an $E$-dimensional vector $\tilde\theta_t$ whose $i$th entry is the prediction of the $i$th expert. Define the following functions over elements $\delta\in\Delta_E$:
\begin{align*}
    f_{e_1}(\delta, \theta^*) &= \frac{1}{1+\alpha} \left[\ell(\delta^T\tilde\theta_t, \theta^*) + \twonormsq{\delta}\right], \qquad
    f_{e_2}(\delta, \theta^*) = \frac{1}{\alpha} \twonormsq{\delta}.
\end{align*}
Note that both these functions are both strongly convex in $\delta$. Consider what happens if the adversary plays $(1+\alpha) f_{e_1} - \alpha f_{e_2} = \ell$. Suppose for the sake of contradiction there exists an algorithm playing $\hat\delta_t$ which achieves $o(\sqrt{T})$ regret with respect to $\delta^*$, defined as the best fixed $\delta\in\Delta_E$ in hindsight:
\begin{align*}
    \delta^* &:= \argmin_{\delta\in\Delta_E} \sum_{t=1}^T \ell(\delta^T\tilde\theta_t, \theta^*).
\end{align*}
As this represents a convex combination of the experts' predictions, it is clear that the loss suffered by $\delta^*$ will be less than or equal to the loss suffered by the best expert. This implies that by taking this algorithm's choice $\hat\delta_t$ each round and playing $\hat\delta_t^T\tilde\theta_t$, we will achieve $o(\sqrt{T})$ regret with respect to the best expert, defying the known lower bound. It follows that the lower bound of $\Omega(\sqrt{T})$ holds even for bounded affine combinations of strongly convex functions.
\end{proof}
This theorem implies two crucial points: firstly, that \emph{ERM remains minimax optimal for this model of extrapolation}; and secondly, that \emph{proper regularization is essential for good \ood generalization}. This provides theoretical justification for the empirical findings of \citet{sagawa2020distributionally} and complements existing results on the value of explicit regularization for group shift \citep{hu2018does}. Additionally, we find that even though there is an exponential computational complexity gap between the two tasks, the statistical gap is not too large---$\Theta(\log T)$ versus $\Theta(\sqrt{T})$ regret.

\section{Related Work}

Many works provide formal guarantees for \ood generalization by assuming invariances in the causal structure of the data: a set of interventions is assumed to result in separate fixed environments \citep{peters2016causal, heinze2018invariant, heinze2020conditional, christiansen2020causal} or distribution shift over time \citep{tian2001causal, didelez2006direct}, and the test distribution will likewise represent such an intervention.
Under sufficiently strong conditions it is then possible to identify which features have invariant relationships with the target variable; recovery of these features ensures reasonable performance despite arbitrary future interventions on the other variables. However, these works assume full or partial observation of the covariates, and therefore they do not apply to the setting where the data is a complex function of unobserved latent variables.

Works which eschew a direct causal formalization often still depend upon the intuition of ``invariance'' within the context of causality. The IRM objective \citep{arjovsky2019invariant} was designed for such a setting assuming the target variables' causal mechanisms remain invariant, but it lacked serious theoretical justification; \citet{krueger2020out} likewise suggest an algorithm for extrapolation but similarly fail to provide any formal guarantees. \citet{rosenfeld2021risks} subsequently showed that, while these and other similar objectives may work under strong conditions in the linear setting, the same cannot be said for more complex data. \citet{albuquerque2020generalizing} theoretically analyze extrapolation beyond the convex hull of domain likelihoods and give generalization bound via $\mathcal{H}$-divergences. Unfortunately, this bound scales linearly with both the maximum discrepancy between pairs of training distributions and between the test distribution and training environment hull.

This work relates the nascent study of domain generalization theory to prior work on online and lifelong learning \citep{thrun1998lifelong, mitchell2015neverending, hazan2016introduction}, for which there already exist provable regret bounds and efficiency guarantees \citep{balcan2015efficient, alquier2017regret}. The main difference is that those works---which are for more general online learning---present new algorithms and give upper bounds, while this work focuses on \ood generalization and proves lower bounds which match rates already known to be achievable for more general classes of losses \citep{hazan2007logarithmic, abernethy2008optimal, suggala2020online}
, implying that existing algorithms (ERM and a noisy variant) are already optimal.

\section{Conclusion and Future Directions}

This work presents the first formal results demonstrating an exponential computational gap between interpolation and extrapolation in domain generalization, a claim which has until now only been given vague intuitive justification.
Perhaps more importantly, we've shown that ERM remains statistically minimax-optimal for both tasks---given the observed failure of ERM in practice, this suggests that there is quite a bit more subtlety to distribution shift in the real world. Taken together, our results present strong evidence that the ``likelihood reweighting'' model of distribution shift, while perhaps appropriate for specific settings involving sub-populations, might not be appropriate for the more general study of extrapolation to new domains. It could instead be beneficial to reconsider existing notions of inter- and extrapolation---particularly those involving linearity or generic likelihood reweighting---in the context of online learning, where the notions of regret and stochastic adversaries allow for more a nuanced study of statistical and algorithmic complexity.

We see two important directions for further research. First, the proposed \good generalization game serves as a standalone framework for the theoretical analysis of learning algorithms. As discussed in Section~\ref{sec:fixed-baseline}, considering regret in the online setting provides a more nuanced signal of an algorithm's expected performance, especially when we are not too worried about the \emph{literal worst case test distribution}. We hope that this new perspective will better enable future work to provide formal \ood generalization guarantees for their proposed methods. We note that this work considers only strongly convex functions, but using the same techniques one could extend the analysis to more general classes such as all convex losses; this setting might eliminate the statistical complexity gap and could lead to additional insight into the differences between inter- and extrapolation.

Second, there still remains significant flexibility in how we define ``interpolation'' and ``extrapolation'' with respect to training environments; we consider one specific notion in this work, and we show that ERM remains optimal---implying that alternative formulations may be preferable. However, it seems likely that different restrictions on the adversary could allow for stronger generalization guarantees. Furthermore, our analysis reveals that the \emph{geometry of the environmental loss functions} is a critical element for generalization. This suggests additional improvements can be achieved with careful representation learning.

\section*{Acknowledgements}
We thank Zack Lipton for his feedback on the framing of this work.

\newpage
\bibliography{main}
\bibliographystyle{icml2021}

\newpage
\onecolumn
\appendix

\section{Proof of Theorem 1}
\label{sec:thm1-proof}
\convexlower*

\begin{proof}
Define $F_t(z) = \sum_{s=1}^t f_s(z)$; since each $f$ is convex, this sum is convex as well. Let $\betastarprev$ be the minimizer of $F_{t-1}$ (by Lemma~\ref{lemma:sum-loss-t-times-convex}, this will lie in $B$), and let $z\in B$ be arbitrary. Finally, note that $\nabla^2 F_t \preceq t\sigmax I$. Then we have the following Taylor expansion:
\begin{align*}
    F_t(z) &= F_{t-1}(z) + f_t(z) \\
    &= F_{t-1}(\betastarprev + (z - \betastarprev)) + f_t(z) \\
    &\leq F_{t-1}(\betastarprev) + \nabla F_{t-1}(\betastarprev)^T (z - \betastarprev) + \frac{(t-1)\sigmax}{2} \twonormsq{z - \betastarprev} + f_t(z) \\
    &= F_{t-1}(\betastarprev) + \frac{(t-1)\sigmax}{2} \twonormsq{z - \betastarprev} + f_t(z),
\end{align*}
where we have used the fact that $\nabla F_{t-1}(\betastarprev) = 0$ by definition. Thus,
\begin{align}
\label{eq:regret-lower-bound-1}
    \sum_{s=1}^t f_s(\hatbeta_s) - F_t(z) &\geq \left( \sum_{s=1}^{t-1} f_s(\hatbeta_s)  - F_{t-1}(\betastarprev) \right) + (f_t(\hatbetat) - f_t(z) - \frac{(t-1)\sigmax}{2} \twonormsq{z - \betastarprev}).
\end{align}
Then we can write
\begin{align*}
    V_t &= \min_{\hatbeta_1\in B} \max_{\lambda_1} \ldots \min_{\hatbetat\in B} \max_{\lambda_t, z\in B} \left( \sum_{s=1}^t f_t(\hatbetat) - F_t(z) \right) \\
    &\geq \min_{\hatbeta_1\in B} \max_{\lambda_1} \ldots \min_{\hatbeta_{t-1}\in B} \max_{\lambda_{t-1}} \biggl[ \left( \sum_{s=1}^{t-1} f_s(\hatbeta_s)  - F_{t-1}(\betastarprev) \right)\\
    &\quad+ \min_{\hatbetat\in B} \max_{\lambda_t, z\in B} \left( f_t(\hatbetat) - f_t(z) - \frac{(t-1)\sigmax}{2} \twonormsq{z - \betastarprev} \right) \biggr].
\end{align*}

Thus, by lower bounding the second term, we can unroll the recursion and lower bound the total regret. In particular, showing a bound of $\Omega(\frac{1}{t})$ will result in an overall regret lower bound of $\Omega(\log T)$, which would imply that ERM achieves minimax-optimal rates for OOD generalization (this is also how we prove Corollary~\ref{cor:erm-opt}).

We proceed by lower bounding the inner optimization term. We consider two possibilities for the choice of $\hatbetat$. Suppose $\twonormsq{\hatbetat - \betastarprev} \geq \frac{g^2}{8t\sigmax^2}$. Then by choosing $z = \betastarprev$ the inner term can be lower bounded by $\min_{\hatbetat\in B} \max_{\lambda_t} \left( f_t(\hatbetat) - f_t(\betastarprev) \right)$. Taylor expanding $f_t$ around $\betastarprev$ gives
\begin{align*}
    f_t(\hatbetat) - f_t(\betastarprev) &\geq \nabla f_t(\betastarprev)^T(\hatbetat - \betastarprev) + \frac{\sigmin}{2} \twonormsq{\hatbetat - \betastarprev}.
\end{align*}
By Lemma~\ref{lemma-prev-minimizer-zero-gradient}, the adversary can always play $\lambda_t$ such that $\nabla f_t(\betastarprev) = 0$. So plugging this in we get
\begin{align*}
    \min_{\hatbetat\in B} \max_{\lambda_t} \left( f_t(\hatbetat) - f_t(\betastarprev) \right) &\geq \frac{\sigmin}{2} \twonormsq{\hatbetat - \betastarprev} \\
    &\geq \frac{g^2\sigmin}{16t\sigmax^2}.
\end{align*}
Now consider the case where $\twonormsq{\hatbetat - \betastarprev} < \frac{g^2}{8t\sigmax^2}$. Suppose the adversary plays any $\lambda_t$ such that $\twonorm{\nabla f_t(\hatbetat)} \geq g$ (by definition, such a choice is always possible). Here we again split on cases, considering the possible values of $\nabla f_t(\betastarprev)^T(\hatbetat - \betastarprev)$:

\textbf{Case 1: $\nabla f_t(\betastarprev)^T(\hatbetat - \betastarprev) \geq \frac{g^2\sigmin}{16t\sigmax^2}$}

\noindent Following the same steps as previously, we find the lower bound 
\begin{align*}
    f_t(\hatbetat) - f_t(\betastarprev) &\geq \nabla f_t(\betastarprev)^T(\hatbetat - \betastarprev) + \frac{\sigmin}{2} \twonormsq{\hatbetat - \betastarprev} \\
    &\geq \nabla f_t(\betastarprev)^T(\hatbetat - \betastarprev) \\
    &\geq \frac{g^2\sigmin}{16t\sigmax^2}.
\end{align*}

\textbf{Case 2: $\nabla f_t(\betastarprev)^T(\hatbetat - \betastarprev) < \frac{g^2\sigmin}{16t\sigmax^2}$}

In this case the lower bound follows directly from Lemma~\ref{lemma-thm-1-case-2}.

Thus the lower bound is shown in all cases; it follows that
\begin{align*}
    V_t &\geq \min_{\hatbeta_1\in B} \max_{\lambda_1} \ldots \min_{\hatbeta_{t-1}\in B} \max_{\lambda_{t-1}} \left[ \left( \sum_{s=1}^{t-1} f_s(\hatbeta_s)  - F_{t-1}(\betastarprev) \right)
    + \frac{g^2\sigmin}{16t\sigmax^2} \right] \\
    &= \min_{\hatbeta_1\in B} \max_{\lambda_1} \ldots \min_{\hatbeta_{t-1}\in B} \max_{\lambda_{t-1}} \left[  \sum_{s=1}^{t-1} f_s(\hatbeta_s)  - F_{t-1}(\betastarprev) \right] + \frac{g^2\sigmin}{16t\sigmax^2} \\
    &= V_{t-1} + \frac{g^2\sigmin}{16t\sigmax^2}.
\end{align*}
Expanding the recursion finishes the proof.
\end{proof}

\section{Proof of Existence for Theorem~\ref{thm:no-sublinear}}
\label{sec:thm234-proof}
We restate Theorem~\ref{thm:no-sublinear} for convenience:
\deterministic*
In the main body, we prove the primary claim. Here we include proof of the existence of a regression task over a set of distributions which induces the loss functions we construct in our proof.
\begin{proof}
Suppose we are regressing labels $y\in\R$ on observations $z\in\R^2$ with squared loss. We’ll define our classifier with a parameter $\beta$ such that given an observation $(z_1,z_2)$ we predict $\beta^2 z_1 + \beta z_2$. This is of course an unusual regression setup, but we’re just giving an existence proof for a simple lower bound.

The first environment will assign all its probability mass to a single example $(z_1, z_2, y) = (0, 1, 0)$. Thus, if we choose a parameter $\beta$, in this environment we will suffer risk $\E[(\beta z_1^2 + \beta z_2 - y)^2] = \beta^2$. This produces the first environment, loss $f_{e_1}(\beta) = \beta^2$.

We define the second environment as having two possible samples: one is $(z_1, z_2, y) = (0, \sqrt{\frac{2\alpha+1}{2\alpha}}, 0)$ and the other is $(z_1, z_2, y) = (\sqrt{\frac{2\alpha+1}{2\alpha}}, 0, 0)$. Thus, the first sample induces loss $\frac{2\alpha+1}{2\alpha}\beta^2$, and the second induces loss $\frac{2\alpha+1}{2\alpha}\beta^4$. Now for the probabilities: we assign probability $\frac{1}{2\alpha+1}$ to the first point and $\frac{2\alpha}{2\alpha+1}$ to the second point. Clearly these sum to 1, and taking the expectation over losses we see that the overall risk is $\beta^4 + \frac{1}{2\alpha}\beta^2$, as desired.
\end{proof}

\section{Lemmas}
\fubinis*
\begin{proof}
Using Fubini's theorem, we have
\begin{align*}
    \calR^\lambda(\beta) &= \int_{\calX\times\calY} \left[ \sum_{e\in\calE} \lambda_e p^e(x,y) \right] \ell(\beta, (x, y))\ d(x,y)\\
    &= \sum_{e\in\calE} \lambda_e \int_{\calX\times\calY} p^e(x,y) \ell(\beta, (x, y))\ d(x,y) \\
    &= \sum_{e\in\calE} \lambda_e \calR^e(\beta). \qedhere
\end{align*}
\end{proof}
\begin{lemma}
\label{lemma:sum-loss-t-times-convex}
For any $F_t = \sum_{s=1}^t f_t$, there exist convex coefficients $\hat\lambda$ such that 
\begin{align*}
    F_t &= t\sum_{e\in\calE} \hat\lambda_e f_e.
\end{align*}
\end{lemma}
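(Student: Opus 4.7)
The plan is to unfold the definition of each per-round loss $f_s$ using Lemma~\ref{lemma:fubinis} and then reindex the double sum. By construction of the game in Algorithm~\ref{alg:game} (in the convex-combination setting of this section), on every round $s$ the adversary selects coefficients $\lambda_s \in \Delta_E$, and the resulting loss is $f_s(\beta) = \calR^{\lambda_s}(\beta)$. Lemma~\ref{lemma:fubinis} then gives $f_s = \sum_{e \in \calE} \lambda_{s,e}\, f_e$, i.e.\ each $f_s$ is itself a convex combination of the $E$ fixed environmental risks.

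Given this, I would simply substitute into the definition of $F_t$ and exchange the order of summation:
\begin{align*}
F_t \;=\; \sum_{s=1}^t f_s \;=\; \sum_{s=1}^t \sum_{e\in\calE} \lambda_{s,e}\, f_e \;=\; \sum_{e\in\calE} \left( \sum_{s=1}^t \lambda_{s,e} \right) f_e.
\end{align*}
Then I would define
\[
\hat\lambda_e \;:=\; \frac{1}{t} \sum_{s=1}^t \lambda_{s,e},
\]
so that $F_t = t \sum_{e\in\calE} \hat\lambda_e f_e$, as desired.

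The remaining step is to verify that $\hat\lambda \in \Delta_E$. Nonnegativity is immediate from $\lambda_{s,e} \geq 0$. For the sum, I exchange order once more: $\sum_{e\in\calE} \hat\lambda_e = \frac{1}{t} \sum_{s=1}^t \sum_{e\in\calE} \lambda_{s,e} = \frac{1}{t} \sum_{s=1}^t 1 = 1$, using $\lambda_s \in \Delta_E$ for each $s$.

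There is no substantive obstacle here; the lemma is a bookkeeping step whose only role is to ensure that the minimizer $\betastarprev$ of $F_{t-1}$ lies in $B$ (used at the beginning of the proof of Theorem~\ref{thm:lower-bound}), since $B$ is assumed to contain the risk minimizer of $p^\lambda$ for every $\lambda \in \Delta_E$, and $F_{t-1}/(t-1)$ is itself such a risk. The only mild subtlety is distinguishing the indexing variable: the sum in $F_t$ is over rounds $s$, while $\hat\lambda$ is indexed over environments $e$, and the averaging factor $1/t$ is exactly what converts the nonnegative row-stochastic weights into a single convex combination.
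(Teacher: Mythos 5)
Your proposal is correct and takes essentially the same route as the paper's proof: expand each $f_s$ as a convex combination of the environment risks, swap the order of the double sum, and define $\hat\lambda_e$ as the average $\frac{1}{t}\sum_{s=1}^t \lambda_{s,e}$. Your explicit check that $\hat\lambda\in\Delta_E$ (nonnegativity and summing to $1$) is a small addition the paper states only implicitly, and your round index $s$ fixes a minor indexing sloppiness in the paper's version.
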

\begin{proof}
Every loss function $f_t$ can be written as a convex combination of the original environment losses:
\begin{align*}
    f_t &= \sum_{e\in\calE} \lambda_{t,e} f_e.
\end{align*}
So, write
\begin{align*}
    F_t &= \sum_{s=1}^t f_t
    = \sum_{s=1}^t \sum_{e\in\calE} \lambda_{t,e} f_e
    = \sum_{e\in\calE} \left(\sum_{s=1}^t \lambda_{t, e}\right) f_e.
\end{align*}
Clearly, $\sum_{e\in\calE} \left(\sum_{s=1}^t \lambda_{t, e}\right) = t$. So, defining $\hat\lambda_e := \frac{1}{t} \left(\sum_{s=1}^t \lambda_{t, e}\right)$ gives the desired result.
\end{proof}

\begin{lemma}
\label{lemma-prev-minimizer-zero-gradient}
For any solution $\betastarprev$ which minimizes the sum of previously seen losses $F_{t-1}$, there exists a convex combination of losses $f_t$ playable by the adversary for which $\nabla f_t(\betastarprev) = 0$.
\end{lemma}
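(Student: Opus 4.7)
The plan is to leverage the preceding Lemma~\ref{lemma:sum-loss-t-times-convex} (which writes any $F_{t-1}$ as a scaling of a single convex combination of the environment losses) together with first-order optimality of $\betastarprev$ to read off the convex coefficients the adversary must play.

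Concretely, the steps I would carry out are as follows. First, apply Lemma~\ref{lemma:sum-loss-t-times-convex} to obtain $\hat\lambda \in \Delta_E$ with
\begin{equation*}
F_{t-1} \;=\; (t-1)\sum_{e\in\calE} \hat\lambda_e\, f_e.
\end{equation*}
Second, observe that $F_{t-1}/(t-1) = \calR^{\hat\lambda}$ (using Lemma~\ref{lemma:fubinis}), so the standing assumption that $B$ contains the risk-minimizer of every mixture $p^\lambda$ places the unique minimizer of $F_{t-1}$ in $B$; by strong convexity (the sum is strongly convex since each $f_e$ is) this minimizer coincides with $\betastarprev$. Third, conclude from first-order optimality that
\begin{equation*}
0 \;=\; \nabla F_{t-1}(\betastarprev) \;=\; (t-1)\sum_{e\in\calE} \hat\lambda_e\, \nabla f_e(\betastarprev),
\end{equation*}
so $\sum_e \hat\lambda_e \nabla f_e(\betastarprev) = 0$. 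Finally, have the adversary play $\lambda_t := \hat\lambda$, which is a legal move since $\hat\lambda \in \Delta_E$; the resulting loss $f_t = \sum_e \hat\lambda_e f_e$ satisfies $\nabla f_t(\betastarprev) = 0$ by construction.

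The only delicate point—and the one I would treat most carefully—is justifying that the first-order condition $\nabla F_{t-1}(\betastarprev) = 0$ genuinely holds, rather than being masked by a boundary contribution from constraining $\beta$ to $B$. This is exactly where the problem's standing assumption on $B$ is used: because $F_{t-1}/(t-1)$ is itself the risk on some mixture distribution $p^{\hat\lambda}$, its unconstrained minimizer is assumed to lie in $B$, hence the constrained and unconstrained minima coincide and the gradient vanishes there. Once this subtlety is addressed, the rest of the argument is essentially bookkeeping about linear combinations.
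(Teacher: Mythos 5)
Your proposal is correct and matches the paper's own argument: both invoke Lemma~\ref{lemma:sum-loss-t-times-convex} to write $F_{t-1}=(t-1)\sum_e\hat\lambda_e f_e$, have the adversary play $\lambda_t=\hat\lambda$, and conclude $\nabla f_t(\betastarprev)=\frac{1}{t-1}\nabla F_{t-1}(\betastarprev)=0$. Your added care that the minimizer is interior (via the standing assumption that $B$ contains the risk minimizer of every mixture $p^\lambda$) is exactly the justification the paper uses implicitly when it notes the minimizer of $F_{t-1}$ lies in $B$.
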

\begin{proof}
By Lemma \ref{lemma:sum-loss-t-times-convex}, we can write $F_{t-1} = (t-1)\sum_{e\in\calE} \hat\lambda_e f_e$ for some convex coefficients $\hat\lambda$. Define $f_t = \sum_{e\in\calE} \hat\lambda_e f_e = \frac{1}{t-1} F_{t-1}$. Since $\betastarprev$ minimizes $F_{t-1}$ it follows that
\begin{align*}
    \nabla f_t(\betastarprev) =  \frac{1}{t-1} \nabla F_{t-1}(\betastarprev) = 0.
\end{align*}
\end{proof}

\begin{lemma}
\label{lemma-thm-1-case-2}
Let $\hatbetat$, $\lambda_t$ be such that $\twonormsq{\hatbetat - \betastarprev} < \frac{g^2}{8t\sigmax^2}$ and $\twonorm{\nabla f_t(\hatbetat)} \geq g$. Define $z := \betastarprev - c\nabla f_t(\hatbetat)$, where $c := 1/2t\sigmax$. If $\nabla f_t(\betastarprev)^T(\hatbetat - \betastarprev) < \frac{g^2\sigmin}{16t\sigmax^2}$, then
\begin{align*}
    f_t(\hatbetat) - f_t(z) - \frac{(t-1)\sigmax}{2}\twonormsq{z - \betastarprev} &\geq \frac{g^2\sigmin}{16t\sigmax^2}.
\end{align*}
\end{lemma}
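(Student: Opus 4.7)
The plan is to decompose $f_t(\hatbetat) - f_t(z)$ using $\betastarprev$ as a pivot, apply the strong convexity inequality on $[f_t(\hatbetat) - f_t(\betastarprev)]$ and the smoothness inequality on $[f_t(\betastarprev) - f_t(z)]$, and then carefully combine with the penalty term. Writing $d = \hatbetat - \betastarprev$, $v = \nabla f_t(\betastarprev)$, and $u = \nabla f_t(\hatbetat)$, strong convexity gives $f_t(\hatbetat) - f_t(\betastarprev) \geq v^T d + \frac{\sigmin}{2}\|d\|^2$, while smoothness applied at $z - \betastarprev = -c u$ gives $f_t(\betastarprev) - f_t(z) \geq c\, v^T u - \frac{\sigmax c^2}{2}\|u\|^2$. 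Adding these and subtracting the penalty $\frac{(t-1)\sigmax c^2}{2}\|u\|^2$ yields
\[
\text{LHS} \;\geq\; v^T d + \tfrac{\sigmin}{2}\|d\|^2 + c\, v^T u - \tfrac{t\sigmax c^2}{2}\|u\|^2.
\]

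Next I would use the choice $c = 1/(2t\sigmax)$, which is tuned so that the net $\|u\|^2$ coefficient collapses nicely: $c - t\sigmax c^2/2 = 3/(8t\sigmax)$. To handle the cross-gradient $v^T u$, I would invoke smoothness in the form $\|u - v\| \leq \sigmax\|d\|$, so that $v^T u = \|u\|^2 - (u-v)^T u \geq \|u\|^2 - \sigmax\|d\|\cdot\|u\|$. Plugging this in and simplifying gives
\[
\text{LHS} \;\geq\; v^T d + \tfrac{\sigmin}{2}\|d\|^2 + \tfrac{3}{8t\sigmax}\|u\|^2 - \tfrac{1}{2t}\|d\|\cdot\|u\|.
\]
Now the hypothesis $\|u\|^2 \geq g^2$ turns the third term into a positive budget of order $g^2/(t\sigmax)$, and the hypothesis $\|d\|^2 < g^2/(8t\sigmax^2)$ keeps all $\|d\|$-dependent terms small. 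The cross-term $\|d\|\cdot\|u\|/(2t)$ is tamed by AM-GM with weight tuned to absorb a fraction of the $\|u\|^2$ budget and leave the $\|d\|^2$ budget intact, using $\|d\|^2 < g^2/(8t\sigmax^2)$ for the resulting $\|d\|^2$ coefficient.

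The Case~2 hypothesis $v^T d < g^2\sigmin/(16t\sigmax^2)$ plays the role of ensuring that Case~1 (an easier direct argument via strong convexity) has already been excluded; what is actually needed from it, together with the other hypotheses, is that $v^T d$ is not too negative. To get a lower bound on $v^T d$ I would combine Cauchy--Schwarz, $v^T d \geq -\|v\|\cdot\|d\|$, with the smoothness bound $\|v\| \leq \|u\| + \sigmax\|d\|$, and then absorb the resulting $\|u\|\cdot\|d\|$ term into the $\|u\|^2$ budget via a second AM-GM step. After tracking constants, the surviving positive contribution from the $\|u\|^2 \geq g^2$ term should match exactly the claimed $g^2\sigmin/(16t\sigmax^2)$ threshold.

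The main obstacle is the bookkeeping in the last paragraph: the positive $(3/(8t\sigmax))\|u\|^2$ budget is tight, and every Cauchy--Schwarz/AM-GM application used to control $v^T d$ and $\|d\|\cdot\|u\|$ eats into this budget. The constants $c = 1/(2t\sigmax)$, the radius $g^2/(8t\sigmax^2)$, and the threshold $g^2\sigmin/(16t\sigmax^2)$ are all calibrated so that, with the right AM-GM weights, the remaining slack is exactly $g^2\sigmin/(16t\sigmax^2)$; choosing the weights carelessly produces a vacuous (non-positive) bound, so the work is in picking them correctly.
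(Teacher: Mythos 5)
Your pivot decomposition is a legitimate (and genuinely different) starting point from the paper's: the paper expands only around $\hatbetat$, reaching $f_t(\hatbetat)-f_t(z)-\frac{(t-1)\sigmax}{2}\twonormsq{z-\betastarprev} \ge \nabla f_t(\hatbetat)^T(\hatbetat-\betastarprev) + \frac{g^2}{8t\sigmax}$, whereas you pivot at $\betastarprev$ and reach (in your notation $d=\hatbetat-\betastarprev$, $u=\nabla f_t(\hatbetat)$, $v=\nabla f_t(\betastarprev)$) the analogous bound $v^T d + \frac{\sigmin}{2}\twonormsq{d} + \frac{3}{8t\sigmax}\twonormsq{u} - \frac{1}{2t}\twonorm{d}\twonorm{u}$; both of these are fine. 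The gap is the final absorption step, and it cannot be fixed by cleverer AM--GM weights. Under the stated hypotheses $\twonorm{d}$ may be as large as about $g/(\sqrt{8t}\,\sigmax)$ while $\twonorm{u}$ may be as small as order $g$, so after $v^T d \ge -\twonorm{v}\twonorm{d} \ge -\twonorm{u}\twonorm{d}-\sigmax\twonormsq{d}$ you face a negative term of order $g^2/(\sqrt{t}\,\sigmax)$, while your entire positive budget $\frac{3}{8t\sigmax}\twonormsq{u}$ is only of order $g^2/(t\sigmax)$---short by a factor of $\sqrt{t}$. Any split $\twonorm{u}\twonorm{d} \le \frac{a}{2}\twonormsq{u} + \frac{1}{2a}\twonormsq{d}$ either makes the $\twonormsq{u}$ coefficient negative (fatal, since $\twonorm{u}$ has no upper bound) or leaves a residual of order $t\sigmax\twonormsq{d}$, which can be as large as $g^2/(8\sigmax)$, i.e.\ $t$ times the claimed bound.

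The deeper problem is that the regime your Cauchy--Schwarz step exposes is realizable, and in it the left-hand side is itself negative, so no bookkeeping starting from only the three stated hypotheses can reach $+\,g^2\sigmin/(16t\sigmax^2)$. Concretely, in one dimension take $f_t(\beta)=\frac{\sigma}{2}\beta^2$ and $\betastarprev = 2g/\sigma$ (e.g.\ environments $\frac{\sigma}{2}\beta^2$ and $\frac{\sigma}{2}(\beta-2g/\sigma)^2$, the second played every previous round; this pair also has forceable gradient norm exactly $g$), and let $\hatbetat = 2g/\sigma - \epsilon$ with $\epsilon = g/(3\sqrt{t}\sigma)$: then $\twonormsq{d} < g^2/(8t\sigma^2)$, $\twonorm{u}\ge g$, and $v^T d = -2g\epsilon < 0$ satisfies the case-2 hypothesis, yet $f_t(\hatbetat)-f_t(z) \approx -2g\epsilon < 0$ once $t \gtrsim 10$. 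So what is actually needed---as you sensed---is a lower bound of order $-g^2\sigmin/(t\sigmax^2)$ on the gradient term, and neither the case-2 hypothesis (an upper bound on $v^T d$) nor Cauchy--Schwarz supplies it at that scale. The paper instead tries to extract it from gradient monotonicity, asserting $u^T d \ge \sigmin\twonormsq{d} - v^T d$ and then applying the case-2 hypothesis; note, however, that monotonicity gives $u^T d \ge \sigmin\twonormsq{d} + v^T d$, so that sign deserves scrutiny before you attempt to transplant the paper's closing step into your decomposition---the example above indicates the obstruction lies in the statement's hypotheses rather than in your choice of constants.
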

\begin{proof}
Expanding $f_t$ around $\hatbetat$,
\begin{align*}
    f_t(\hatbetat) - f_t(z) \geq -\nabla f_t(\hatbetat)^T (z - \hatbetat) - \frac{\sigmax}{2}\twonormsq{z - \hatbetat},
\end{align*}
which gives
\begin{align}
    \label{eq:begin-lower-bound}
    &f_t(\hatbetat) - f_t(z) - \frac{(t-1)\sigmax}{2}\twonormsq{z - \betastarprev} \nonumber \\
    &\geq \nabla f_t(\hatbetat)^T (\hatbetat - z) - \frac{\sigmax}{2} \left(\twonormsq{z - \hatbetat} + (t-1)\twonormsq{z - \betastarprev} \right) \nonumber \\
    &= \nabla f_t(\hatbetat)^T(\hatbetat - \betastarprev + c\nabla f_t(\hatbetat)) - \frac{\sigmax}{2} \left( \twonormsq{\betastarprev - \hatbetat - c\nabla f_t(\hatbetat)} + (t-1) \twonormsq{c\nabla f_t(\hatbetat)} \right).
\end{align}
By the triangle inequality,
\begin{align*}
    \twonorm{\betastarprev - \hatbetat - c\nabla f_t(\hatbetat)} &\leq \twonorm{\betastarprev - \hatbetat} + c\twonorm{\nabla f_t(\hatbetat)},
\end{align*}
and therefore
\begin{align*}
    \frac{1}{2}\twonormsq{\betastarprev - \hatbetat - c\nabla f_t(\hatbetat)} &\leq \twonormsq{\betastarprev - \hatbetat} + c^2\twonormsq{\nabla f_t(\hatbetat)}.
\end{align*}
Continuing with the lower bound in Equation~\ref{eq:begin-lower-bound},
\begin{align*}
    &\geq \nabla f_t(\hatbetat)^T(\hatbetat - \betastarprev) + c\twonormsq{\nabla f_t(\hatbetat)} - \sigmax \left( \twonormsq{\betastarprev - \hatbetat} + c^2\twonormsq{\nabla f_t(\hatbetat)} \right) - \frac{(t-1)\sigmax c^2}{2} \twonormsq{\nabla f_t(\hatbetat)} \\
    &\geq \nabla f_t(\hatbetat)^T(\hatbetat - \betastarprev) + \left( c - \frac{1}{8t\sigmax} -  \frac{(t+1)c^2\sigmax}{2} \right) \twonormsq{\nabla f_t(\hatbetat)},
\end{align*}
where we've used the upper bound on $\twonormsq{\betastarprev - \hatbetat}$ and simplified. Recalling that $c = \frac{1}{2t\sigmax}$ and noting that $\frac{t+1}{t^2} \leq \frac{2}{t}$,
\begin{align*}
    &=  \nabla f_t(\hatbetat)^T(\hatbetat - \betastarprev) + \left(\frac{1}{2t\sigmax}-  \frac{1}{8t\sigmax} - \frac{(t+1)}{8t^2\sigmax}\right) \twonormsq{\nabla f_t(\hatbetat)} \\
    &\geq \nabla f_t(\hatbetat)^T(\hatbetat - \betastarprev) + \frac{\twonormsq{\nabla f_t(\hatbetat)}}{8t\sigmax} \\
    &\geq \nabla f_t(\hatbetat)^T(\hatbetat - \betastarprev) + \frac{g^2}{8t\sigmax}.
\end{align*}
By strong convexity,
\begin{align*}
    (\nabla f_t(\betastarprev) - \nabla f_t(\hatbetat))^T (\betastarprev - \hatbetat) \geq \sigmin \twonormsq{\betastarprev - \hatbetat},
\end{align*}
and therefore
\begin{align*}
    \nabla f_t(\hatbetat)^T(\hatbetat - \betastarprev) &\geq \sigmin \twonormsq{\betastarprev - \hatbetat} - \nabla f_t(\betastarprev)^T(\hatbetat - \betastarprev) \\
    &> - \frac{g^2\sigmin}{16t\sigmax^2},
\end{align*}
where the second inequality is due to the assumption in the Lemma statement. Plugging this in above gives
\begin{align*}
    \nabla f_t(\hatbetat)^T(\hatbetat - \betastarprev) + \frac{g^2}{8t\sigmax} &> - \frac{g^2\sigmin}{16t\sigmax^2} + \frac{g^2}{8t\sigmax} \\
    &\geq \frac{g^2\sigmin}{8t\sigmax^2} - \frac{g^2\sigmin}{16t\sigmax^2} \\
    &= \frac{g^2\sigmin}{16t\sigmax^2},
\end{align*}
completing the proof.
\end{proof}

\end{document}